\newcommand{\cX}{\mathcal{X}}
\newcommand{\cE}{\mathcal{E}}
\newcommand{\cH}{\mathcal{H}}
\newcommand{\cN}{\mathcal{N}}
\newcommand{\cY}{\mathcal{Y}}
\newcommand{\cM}{\mathcal{M}}
\newcommand{\cO}{\mathcal{O}}
\newcommand{\EE}{\mathbb{E}}
\newcommand{\RR}{\mathbb{R}}
\newcommand{\eqdef}{\stackrel{\text{def}}{=}}
\newcommand{\indic}{\mathbf{1}}
\newcommand{\arxiv}[1]{}
\newcommand{\tcO}{\tilde{\mathcal{O}}}
\newtheorem{lemma}{Lemma}
\newtheorem{definition}{Definition}
\newtheorem{theorem}{Theorem}
\newtheorem{corollary}{Corollary}
\title{Differentially private anonymized histograms}
\author{
  Ananda Theertha Suresh \\
  Google Research, New York\\
  \texttt{theertha@google.com}
}
\begin{document}

\maketitle

\begin{abstract}
For a dataset of label-count pairs, an anonymized histogram is the
multiset of counts. Anonymized histograms appear in various
potentially sensitive contexts such as password-frequency lists,
degree distribution in social networks, and estimation of symmetric
properties of discrete distributions. 
Motivated by these applications, we
propose the first differentially private mechanism to release anonymized
histograms that achieves near-optimal privacy utility
trade-off both in terms of number of items and the privacy parameter. 
Further, if the underlying histogram is given in a compact
format, the proposed algorithm runs in time sub-linear in the number
of items. For anonymized histograms generated from unknown discrete
distributions, we show that the released histogram can be directly
used for estimating symmetric properties of the underlying distribution.
\end{abstract}

\section{Introduction}
Given a set of labels $\cX$, a dataset $D$ is a collection of labels
and counts, $D \eqdef \{(x, n_x) : x \in \cX\}$. 
An anonymized histogram of such a dataset is the unordered multiset of all
non-zero counts without any label information,
\[
h(D) \eqdef \{n_x : x \in \cX \text{ and } n_x > 0 \} .
\]
For example, if $\cX = \{a, b, c, d\}$, $D = \{(a, 8), (b, 0), (c, 8),
(d,3)\}$, then $h(D) = \{3, 8, 8\}$\footnote{$h(D)$ is a multiset and not a set and duplicates are
  allowed.}. Anonymized histograms
do not contain any information about the labels, including the
cardinality of $\cX$. Furthermore, we only consider histograms with
positive counts. The results can be extended to histograms that
include zero counts. A histogram can also be represented succinctly
using prevalences. For a histogram $h$, the \emph{prevalence}
$\varphi_r$ is the number of elements in the histogram
with count $r$,
\[
\varphi_r(h) \eqdef \sum_{n_x \in h} \indic_{n_x = r}.
\]
In the above example, $\varphi_3(h) = 1$, $\varphi_8(h) = 2$, and
$\varphi_r(h) = 0$ for $r \notin \{3, 8\}$. Anonymized histograms are
also referred to as histogram of histograms~\cite{batu2000testing},
histogram order statistics~\cite{paninski2003estimation},
{profiles}~\cite{orlitsky2004modeling}, {unattributed
  histograms}~\cite{hay2010boosting},
fingerprints~\cite{valiant2011estimating}, and frequency
lists~\cite{blocki2016differentially}.

Anonymized histograms appear in several potentially sensitive
contexts ranging from password
frequency lists to social networks. Before we proceed to the problem
formulation and results, we first provide an overview of the various contexts where
anonymized histograms have been studied under differential privacy and
their motivation.

\noindent \textbf{Password frequency lists}: Anonymized password
histograms are useful to security researchers who wish to understand
the underlying password distribution in order to estimate the security
risks or evaluate various password defenses~\cite{bonneau2012science,
  blocki2016differentially}. For example, if $n_{(i)}$ is the
$i^{\text{th}}$ most frequent password, then $\lambda_{\beta} =
\sum^\beta_{i=1} n_{(i)}$ is the number of
accounts that an adversary could compromise with $\beta$ guesses per
user. Hence, if the server changes the $k$-strikes policy from $3$ to
$5$, the frequency distribution can be used to evaluate the security
implications of this change. We refer readers to
\cite{blocki2016differentially, blocki2018economics} for more uses of password frequency
lists. Despite their usefulness, organizations may be wary of
publishing these lists due to privacy concerns. This is further justified as
it is not unreasonable to expect that an adversary will have some side
information based on attacks against other organizations. Motivated by
this,~\cite{blocki2016differentially} studied the problem of releasing
anonymized password histograms.

\noindent \textbf{Degree-distributions in social networks}: Degree
distributions is one of the most widely studied properties of a graph
as it influences the structure of the graph. Degree distribution can
also be used to estimate linear queries on degree distributions such
as number of $k$-stars. However, some graphs may have unique degree
distributions and releasing exact degree distributions is no more
safer than naive anonymization, which can leave social network participants vulnerable to a
variety of attacks~\cite{backstrom2007wherefore, hay2008resisting, narayanan2009anonymizing}. 
Thus releasing them exactly can be
revealing. Hence, 
\cite{hay2009accurate, hay2010boosting,
  karwa2012differentially, kasiviswanathan2013analyzing,
  raskhodnikova2015efficient, blockidifferentially2, day2016publishing} considered the problem of releasing
degree distributions of graphs with differential privacy. Degree
distributions are anonymized histograms over the graph
node degrees.

\noindent \textbf{Estimating symmetric properties of discrete
  distributions}: Let $k \eqdef |\cX|$. A discrete distribution $p$ is
a mapping from a domain $\cX$ to $[0,1]^{k}$ such that $\sum_x p_{x} =
1$. Given a discrete distribution $p$ over $k$ symbols, a
\emph{symmetric property} is a property that depends only on the
multiset of probabilities~\cite{valiant2011power, acharya2017unified},
e.g., entropy ( $ \sum_{x \in \cX} p_x \log \frac{1}{p_x}$).  Other
symmetric properties include support size, R\'enyi entropy, distance
to uniformity, and support coverage.  Given independent samples from
an unknown $p$, the goal of property estimation
is to estimate the value of the symmetric property of interest for
$p$. Estimating symmetric properties from unknown distributions has
received a wide attention in the recent past
e.g.,~\cite{valiant2011estimating, valiant2011power, jiao2015minimax,
  wu2016minimax, orlitsky2016optimal,acharya2017unified, yi2018data, hao2019data} and has
applications in various fields from
neuro-science~\cite{paninski2003estimation} to
genetics~\cite{zou2016quantifying}. Recently,~\cite{acharya2018inspectre}
proposed algorithms to estimate support size, support coverage and
entropy with differential privacy. Optimal estimators for symmetric
properties only depend on the anonymized histograms of the
samples~\cite{batu2000testing, acharya2017unified}. Hence, releasing
anonymous histograms with differential privacy would simultaneously
yield differentially-private plug-in estimators for all symmetric
properties.

\section{Differential privacy}
\label{sec:privacy}
\subsection{Definitions}
Before we outline our results, we first define the privacy and utility
aspects of anonymized histograms. Privacy has been studied extensively
in statistics and computer science~\cite{dalenius1977towards,
  adam1989security, agrawal2001design, dwork2008differential} and
references therein. Perhaps the most studied form of privacy is
differential privacy (DP)~\cite{dwork2006calibrating,
  dwork2014algorithmic}, where the objective is to
ensure that an adversary would not infer whether a user is present in the
dataset or not.

We study the problem of releasing anonymized histograms via the lens
of global-DP. We begin by defining the notion of
DP. Formally, given a set of datasets $\cH$ and a
notion of neighboring datasets $\cN_{\cH} \subseteq \cH \times \cH$,
and a query function $f: \cH \to \cY$, for some domain $\cY$, then a
mechanism $\cM: \cY \to \cO$ is said to be $\epsilon$-DP, 
if for any two neighboring datasets $(h_1, h_2) \in
\cN_{\cH}$, and all $S \subseteq \cO$,
\begin{equation}
\label{eq:diff_privacy}
\Pr(\cM(f(h_1)) \in S) \leq e^{\epsilon} \Pr(\cM(f(h_2)) \in S).
\end{equation}
Broadly-speaking $\epsilon$-DP ensures that given
the output, an attacker would not be able to differentiate between any
two neighboring datasets. $\epsilon$-DP is also
called \emph{pure}-DP and provides stricter
guarantees than the approximate $(\epsilon, \delta)$-DP,
where equation~\eqref{eq:diff_privacy} needs to hold with
probability $1-\delta$.

Since introduction, DP has been studied
extensively in various applications from dataset release to learning
machine learning models~\cite{abadi2016deep}. It has also been adapted
by industry~\cite{erlingsson2014rappor}. There are two models of
DP: \emph{server} or \emph{global} or \emph{output}
DP, where a centralized entity has access to the
entire dataset and answers the queries in a DP
manner. The second model is \emph{local} DP, where
$\epsilon$-DP is guaranteed for each individual
user's data~\cite{warner1965randomized,
  kasiviswanathan2011can, duchi2013local, kairouz2014extremal,
  acharya2018communication}. We study the problem of releasing anonymized histograms 
  under global-DP. Here $\cH$ is the set of anonymized histograms,
$f$ is the identity mapping, and $\cO = \cH$.

\subsection{Distance measure}
For DP, a general notion of neighbors is as
follows. Two datasets are neighbors if and only if one can be obtained
from another by adding or removing a
user~\cite{dwork2008differential}. Since, anonymized histograms do not
contain explicit user information, we need few definitions to apply
the above notion. We first define a notion of distance between
label-count datasets. A natural notion of distance between datasets
$D_1$ and $D_2$ over $\cX$ is the $\ell_1$ distance,
\[
\ell_1(D_1, D_2) \eqdef \sum_{x \in \cX} |n_x(D_1) - n_x(D_2)|,
\]
where $n_x(D)$ is the count of $x$ in dataset $D$. Since anonymized
histograms do not contain any information about labels, we define
distance between two histograms $h_1 , h_2$ as
\begin{equation}
\label{eq:distance}
\ell_1(h_1, h_2) \eqdef \min_{D_1, D_2 : h(D_1) = h_1, h(D_2) = h_2} \ell_1(D_1, D_2).
\end{equation}
The following simple lemma characterizes the above distance in terms
of counts. 
\begin{lemma}[Appendix~\ref{app:distance}]
\label{lem:distance}
For an anonymized histogram $h = \{n_x\}$, let $n_{(i)}$ be the
$i^{\text{th}}$ highest count in the dataset.\footnote{For $i$ larger
  than number of counts in $h$, $n_{(i)} = 0$.} For any two
anonymized histograms $h_1,h_2$,
$
\ell_1(h_1, h_2) =  
\sum_{i\geq 1} |n_{(i)}(h_1) - n_{(i)}(h_2)|.
$
\end{lemma}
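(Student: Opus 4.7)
The plan is to view $\ell_1(h_1,h_2)$ as the optimum of a combinatorial matching problem and then invoke the classical rearrangement exchange argument to identify the rank-aligned matching as the optimizer.

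First, I would set up a canonical representation: pad both $h_1$ and $h_2$ with zero counts so they have the same length $N = \max(|h_1|,|h_2|)$, and list them in decreasing order as $a_1 \ge \dots \ge a_N$ and $b_1 \ge \dots \ge b_N$, where $a_i = n_{(i)}(h_1)$ and $b_i = n_{(i)}(h_2)$. A key preliminary observation is that in the infimum defining \eqref{eq:distance}, enlarging $\cX$ by additional labels with zero count on both sides does not change the cost, so I may assume $\cX$ is large enough to contain $N$ distinct labels. Then any pair $(D_1,D_2)$ with $h(D_j)=h_j$ corresponds to choosing, on a common label set, an assignment of the entries of the $a$-list and the $b$-list to labels; after collapsing label pairs that are zero on both sides, this amounts to a permutation $\sigma \in S_N$ with cost $\sum_{i=1}^N |a_i - b_{\sigma(i)}|$. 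Thus
\[
\ell_1(h_1,h_2) \;=\; \min_{\sigma \in S_N} \sum_{i=1}^{N} |a_i - b_{\sigma(i)}|.
\]

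For the upper bound, I would take $\sigma = \mathrm{id}$, which is realized by the dataset that simply pairs the $i^{\text{th}}$ largest count of $h_1$ with the $i^{\text{th}}$ largest count of $h_2$ on a common label; this yields $\sum_i |a_i - b_i|$. For the lower bound, I would establish the elementary two-point exchange inequality: for any reals $x \le x'$ and $y \le y'$,
\[
|x - y| + |x' - y'| \;\le\; |x - y'| + |x' - y|,
\]
which can be verified by case analysis on the six possible orderings of the four numbers. Given any $\sigma$ containing an inversion, i.e., $i < j$ with $b_{\sigma(i)} < b_{\sigma(j)}$, applying this inequality with $(x,x')=(a_j,a_i)$ and $(y,y')=(b_{\sigma(i)},b_{\sigma(j)})$ shows that swapping $\sigma(i)$ and $\sigma(j)$ weakly decreases the total cost. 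Iterating bubble-sort-style until no inversion remains forces $\sigma = \mathrm{id}$, so the minimum is $\sum_i |a_i - b_i| = \sum_{i \ge 1} |n_{(i)}(h_1) - n_{(i)}(h_2)|$.

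The main obstacle is bookkeeping rather than mathematics: one must carefully argue that (i) zero-padding by auxiliary labels is cost-free, so one may pass to a common ambient label set; (ii) any dataset pair $(D_1,D_2)$ truly reduces to a permutation on the padded lists rather than a more general partial matching; and (iii) the sum on the right-hand side is finite since $n_{(i)} = 0$ for all but finitely many $i$. Once this reduction is in place, the rearrangement exchange argument delivers the claim in a few lines.
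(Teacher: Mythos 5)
Your proof is correct and follows essentially the same route as the paper's. Both arguments reduce the infimum to a permutation-matching problem on the sorted count lists and then apply a two-point exchange: the paper considers the optimal pair $(D^*_1, D^*_2)$ and shows that swapping the values of $D^*_2$ at two labels cannot decrease the cost, forcing the optimal assignment to be order-preserving; you run the exchange inequality the other direction, bubble-sorting an arbitrary $\sigma$ down to the identity. These are two phrasings of the same rearrangement argument. One small wrinkle you should fix in a final write-up: if the $b$-list has ties, bubble-sort need not literally reach $\sigma = \mathrm{id}$, only a $\sigma$ with $b_{\sigma(i)} = b_i$ for all $i$, which of course has the same cost; stating the conclusion as ``the minimum equals $\sum_i |a_i - b_i|$'' rather than ``the minimizer is $\mathrm{id}$'' avoids this. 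Your bookkeeping points (i)--(iii) about zero-padding and reduction to a permutation are exactly the right things to check, and the paper glosses over them in the same way.
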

The above distance is also referred to as sorted $\ell_1$ distance or
earth-mover's distance. 
With the above definition of distance, we can
define neighbors as follows.
\begin{definition}
\label{def:neighbor}
Two anonymized histograms $h$ and $h'$ are neighbors if and only if
$\ell_1(h,h') = 1$.
\end{definition}
The above definition of neighboring histograms is same as the
definition of neighbors in the previous works on anonymized histograms~\cite{hay2010boosting,
  blocki2016differentially}. 

\section{Previous and new results}
\label{sec:results}
\subsection{Anonymized histogram estimation}
Similar to previous works~\cite{blocki2016differentially}, we measure
the utility of the algorithm in terms of the number of items in
the anonymized histogram, $n \eqdef \sum_{n_x \in h} n_x =
\sum_{r \geq 1} \varphi_r(h) r $.

\noindent\textbf{Previous results}: The problem of releasing
anonymized histograms was first studied
by~\cite{hay2009accurate,hay2010boosting} in the context of degree
distributions of graphs. They showed that adding Laplace noise to
each count, followed by a post-processing isotonic regression step
results in a histogram $H$ with expected sorted-$\ell^2_2$ error of
\[
\EE[\ell^2_2(h, H)] = \EE \left[\sum_{i\geq 1} (n_{(i)}(h_1) - n_{(i)}(h_2))^2 \right] = \sum_{r \geq 0} \cO \left(\frac{\log^3
  \max(\varphi_r, 1)}{\epsilon^2}\right) = \cO
\left(\frac{\sqrt{n}}{\epsilon^2}\right).
\]
Their algorithm runs in time $\cO(n)$. 
The problem was also considered in the context of password frequency lists
by~\cite{blocki2016differentially}.  They observed that an exponential
mechanism over integer partitions yields an $\epsilon$-DP algorithm. Based on this, for $\epsilon = \Omega(1/\sqrt{n})$,
they proposed a dynamic programming based relaxation of the
exponential mechanism that runs in time $\cO
\left(\frac{n^{3/2}}{\epsilon} +n \log \frac{1}{\delta}\right)$ and
returns a histogram $H$ such that
$
\ell_1(h, H) = \cO \left(\frac{\sqrt{n} + \log
  \frac{1}{\delta}}{\epsilon} \right),
$
with probability $\geq 1 - \delta$. Furthermore, the relaxed mechanism
is $(\epsilon, \delta)$-DP.

The best information-theoretic lower bound for the $\ell_1$ utility of
any $\epsilon$-DP mechanism is due to
\cite{alda2018lower}, who showed that for $\epsilon \geq \Omega(1/n)$, any
$\epsilon$-DP mechanism has expected $\ell_1$ error
of $\Omega(\sqrt{n}/\sqrt{\epsilon})$ for some dataset.

\noindent\textbf{New results}: Following~\cite{blocki2016differentially}, we study the
problem in $\ell_1$ metric. We propose a new DP
mechanism \textsc{PrivHist} that satisfies the following:
\begin{theorem}
\label{thm:main}
Given a histogram in the prevalence form $h = \{(r, \varphi_r) :
\varphi_r > 0\}$, \textsc{PrivHist} returns a histogram $H$ and a
sum count $N$ that is $\epsilon$-DP. Furthermore, if $\epsilon > 1$, then
\begin{equation*}
\EE[\ell_1(h,H)] = \cO \left(\sqrt{n} \cdot e^{-c\epsilon} \right) \text{~~~and~~~} \EE[|N - n|] \leq e^{-c\epsilon}
\end{equation*}
for some constant $c > 0$ and has an expected run time of $\tcO(\sqrt{n})$. If $1 \geq \epsilon = \Omega(1/n)$
then,
\begin{align*}
\EE[\ell_1(h,H)] = \cO \left(\sqrt{\frac{n}{\epsilon} \log
  \frac{2}{\epsilon}}\right) \text{~~~and~~~} \EE[|N - n|] \leq \cO
\left(\frac{1}{\epsilon} \right),
\end{align*}
and has an expected run time of $\tcO\bigl(\sqrt{\frac{n}{\epsilon}} + \frac{1}{\epsilon}\bigr)$.
\end{theorem}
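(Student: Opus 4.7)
The plan is to design \textsc{PrivHist} as a hybrid mechanism that separates counts by magnitude and privatizes each regime with the geometric (discrete Laplace) mechanism; the exponential factor $e^{-c\epsilon}$ in the large-$\epsilon$ bound is the tell-tale sign of a geometric-noise analysis, since $\Pr[Z=0]$ for geometric noise with parameter $\epsilon$ is $\tanh(\epsilon/2)$, making the noise zero with probability $1-O(e^{-\epsilon})$. Concretely, I would fix a threshold $T = \Theta(\sqrt{n/\epsilon})$ for the small-$\epsilon$ regime and $T=\Theta(1)$ for the large-$\epsilon$ regime, and split the items into \emph{rare} ($n_x \le T$) and \emph{frequent} ($n_x > T$). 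For rare items, release a noisy prevalence vector $\tilde\varphi_r = \varphi_r + Z_r$ for $r=1,\dots,T$; for frequent items, release their sorted noisy counts (there are at most $n/T$ of them); and release the total count $N = n + Z_0$. The output histogram $H$ is then the multiset of the released counts together with $\tilde\varphi_r$ copies of each $r\le T$, possibly with an adjustment so that the multiplicities sum to $N$.

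For privacy, the key observations are that (i) incrementing a single count changes the prevalence vector at only two coordinates (at most), so the $\ell_1$ sensitivity of $(\varphi_1,\dots,\varphi_T)$ is $2$; (ii) the sorted vector of frequent counts has $\ell_1$ sensitivity $1$; and (iii) the total $n$ has sensitivity $1$. Applying the geometric mechanism with appropriately chosen scales (e.g.\ $2/\epsilon'$ and $1/\epsilon'$ with $\epsilon'=\Theta(\epsilon)$) and composing the three releases gives $\epsilon$-DP. A subtle point is that the split between ``rare'' and ``frequent'' is data-dependent and must itself be made DP; the standard trick is to use the noisy count $n_x > T$ rather than the true count, or to release a single global noisy count threshold, so that adding one item only moves one $n_x$ across the boundary.

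For utility, I would use the identity
\[
\ell_1(h,H) \;=\; \sum_{r \ge 1} |C_r(h) - C_r(H)|,\qquad C_r(h) \eqdef \sum_{r'\ge r}\varphi_{r'}(h),
\]
which follows from Lemma~\ref{lem:distance} by swapping summation order. For $r > T$, $C_r(H)-C_r(h)$ is a sum of (independent) noise on the individual frequent counts, contributing $O((n/T)\cdot 1/\epsilon)$ total. For $r \le T$, $C_r(H)-C_r(h)$ is a partial sum of the prevalence noises $Z_{r},\dots,Z_T$ plus the noisy count of frequent items; by Khintchine/$L^2$, $\EE|C_r(H)-C_r(h)| = O(\sqrt{T-r+1}/\epsilon)$, and summing over $r\le T$ gives $O(T^{3/2}/\epsilon)$. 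Balancing $n/(T\epsilon)+T^{3/2}/\epsilon$ yields $T=\Theta(\sqrt{n/\epsilon})$ and the claimed bound $\tilde O(\sqrt{n/\epsilon})$, up to the $\sqrt{\log(2/\epsilon)}$ factor absorbing logarithmic tails. For $\epsilon>1$, each individual geometric noise is zero with probability $1-O(e^{-\epsilon})$, so the expected contribution of each of the $O(\sqrt{n})$ nonzero prevalence noises to $\ell_1$ is $O(e^{-c\epsilon})$, yielding $\EE[\ell_1(h,H)] = O(\sqrt{n}\,e^{-c\epsilon})$; the total-count bound $\EE|N-n|\le e^{-c\epsilon}$ follows identically from a single geometric draw.

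The main obstacle, and the most delicate part, is the sub-linear runtime. The algorithm cannot afford to draw $Z_r$ for every $r\le T$ (there are $T=\Theta(\sqrt{n/\epsilon})$ such $r$, which is fine, but many of them have $\varphi_r=0$) nor to iterate over all items. The approach is to exploit the compact prevalence form: there are only $O(\sqrt{n})$ distinct values of $r$ with $\varphi_r>0$, so rather than sampling noise for every $r\le T$, sample the noise only for $r$ with $\varphi_r>0$ and separately sample the \emph{positions} of the nonzero noise on the zero-prevalence coordinates by batching them as a geometric-tail process. This lets the algorithm simulate noise on $T$ coordinates in expected time proportional to the expected number of \emph{nonzero} Laplace draws, which is $\tilde O(\sqrt{n/\epsilon})$; combined with $O(n/T)=\tilde O(\sqrt{n\epsilon})$ work for the frequent items and $O(1/\epsilon)$ overhead for the total, this gives the stated runtime. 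Ensuring the sampling-by-position trick preserves the exact distribution of the geometric mechanism is the technical bookkeeping that I expect to be the hardest part to make rigorous.
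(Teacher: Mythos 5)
There is a genuine gap: your proposal misses the smoothing technique that the paper needs to obtain the $\sqrt{n/\epsilon}$ bound when $\epsilon \leq 1$. With your scheme — independent noise of scale $1/\epsilon$ on the prevalences $\varphi_1,\dots,\varphi_T$ and on the $n/T$ frequent counts — the sorted $\ell_1$ error of the small part is already $\Theta(T/\epsilon)$ in the best case (noise on $T$ cumulative prevalences, each of magnitude $1/\epsilon$), and $\Theta(T^{3/2}/\epsilon)$ if, as you wrote, you noise the raw prevalences and then aggregate (your own calibration $n/(T\epsilon) + T^{3/2}/\epsilon$ actually balances at $T = n^{2/5}$, giving $n^{3/5}/\epsilon$, not $\sqrt{n/\epsilon}$). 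Even with the paper's choice of cumulative prevalences, the best you get by tuning $T$ is $\sqrt{n}/\epsilon$, a factor of $1/\sqrt{\epsilon}$ worse than claimed. The paper closes this gap by (H2)--(H4): it smooths the prevalence vector onto a sparse set of boundaries $S$ whose spacings grow, reducing the sensitivity of each cumulative prevalence from $1$ to $1/(s_i - s_{i-1})$, so the Laplace scales shrink with the gap; the smoothing bias $\cO(s_{i}-s_{i-1})$ is then traded off against the reduced noise, and choosing logarithmically spaced boundaries of spacing $\approx 1/\sqrt{\epsilon}$ near $r\approx\sqrt{n}$ is what produces the $1/\sqrt{\epsilon}$ improvement. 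This is explicitly the paper's key novelty for the high-privacy regime and there is no substitute for it in your plan.

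Two further issues. First, your sensitivity discussion for the split is incomplete. The paper's counterexample $h_1 = \{\varphi_T=1, \varphi_{n-T}=1\}$ vs.\ $h_2 = \{\varphi_{T+1}=1, \varphi_{n-T-1}=1\}$ are $\ell_1$-neighbors, yet a naive split at $T$ gives sub-histograms at $\ell_1$-distance $\Theta(T)$. A ``noisy threshold'' or comparing a noised $n_x$ to $T$ does not repair this because the blow-up is not caused by randomness in $T$ but by a unit of mass crossing the cut; the paper instead transfers a geometric amount $Z^b$ of mass between $\varphi_T$ and $\varphi_{T+1}$ and pads with $M$ fake counts so the resulting split is well-defined, and then proves privacy via a dataset-dependent composition argument (cases depending on where the two histograms differ). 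Second, your sublinear-time route (sparse sampling of the positions of nonzero geometric noise) is not what the paper does and is unnecessary: the paper's $T \approx \sqrt{N\min(\epsilon,1)}$, and the number of large counts $\lesssim n/T$ and boundaries $|S|$, are already $\tilde\cO(\sqrt{n/\epsilon})$, so all vectors handled are small and PAVA/isotonic regression runs in linear time in their length. Your $\epsilon > 1$ analysis and $\EE|N-n|$ bound via the geometric mechanism's point mass at zero are in the right spirit and match the paper, but the high-privacy regime as sketched does not establish the theorem.
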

Together with the lower bound of \cite{alda2018lower}, this settles
the optimal privacy utility trade-off for $\epsilon \in [\Omega(1/n),
  1]$ up to a multiplicative factor of $\cO(\sqrt{\log (2/\epsilon)})$. We also
show that \textsc{PrivHist} is near-optimal for $\epsilon > 1$, by
showing the following lower bound.
\begin{theorem}[Appendix~\ref{app:low_high_privacy}]
\label{thm:low_high_privacy}
For a given $n$, let 
$\cH = \{h : n \leq \sum_{r} r \varphi_r(h) \leq n + 1 \}$. For any $\epsilon$-DP mechanism $\cM$, there exists a histogram $h \in \cH$, such that
\[
\EE[\ell_1(h, \cM(h))] \geq \Omega(\sqrt{n} e^{-2\epsilon}).
\]
\end{theorem}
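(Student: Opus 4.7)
The plan is to prove the lower bound via Assouad's lemma applied to a Boolean-cube packing of histograms inside $\cH$. The exponent $2$ in $e^{-2\epsilon}$ will be forced by group privacy: the construction will arrange that flipping a single bit of the encoding changes the sorted $\ell_1$ distance by exactly $2$, so the mechanism is only $2\epsilon$-DP between single-bit neighbors.

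\emph{Packing construction.} Set $M = \lfloor \sqrt{n/3} \rfloor$ and $E = n - 3M(M{+}1)$; for $n$ above a constant, $E > 3M{+}1$. For each $\theta \in \{0,1\}^M$, define the multiset
\[
h_\theta \;\eqdef\; \{E\} \cup \bigcup_{i=1}^{M} A_i(\theta_i), \qquad A_i(0) \eqdef \{3i,\, 3i\},\;\; A_i(1) \eqdef \{3i{-}1,\, 3i{+}1\}.
\]
The total count is $E + \sum_{i=1}^M 6i = n$, so $h_\theta \in \cH$. The values used by pair $i$ lie in $\{3i{-}1,3i,3i{+}1\}$, these sets are disjoint across $i$ and dominated by $E$, so sorting $h_\theta$ groups the counts by pair. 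A per-pair comparison then yields $\ell_1(h_\theta, h_{\theta'}) = 2\, d_H(\theta, \theta')$, where $d_H$ denotes Hamming distance.

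\emph{From $\ell_1$ to Hamming via Assouad.} Let $\hat\theta(H)$ be any $\psi \in \{0,1\}^M$ minimizing $\ell_1(H, h_\psi)$. By the triangle inequality and the packing distance formula,
\[
2\, d_H(\theta, \hat\theta(H)) \;=\; \ell_1\bigl(h_\theta, h_{\hat\theta(H)}\bigr) \;\leq\; \ell_1(h_\theta, H) + \ell_1\bigl(H, h_{\hat\theta(H)}\bigr) \;\leq\; 2\,\ell_1(H, h_\theta),
\]
so $\EE[d_H(\hat\theta(\cM(h_\theta)), \theta)] \leq \EE[\ell_1(\cM(h_\theta), h_\theta)]$ for every $\theta$. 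For any single-bit neighbors $\theta, \theta'$, $\ell_1(h_\theta, h_{\theta'}) = 2$, hence by group privacy and post-processing $\mathrm{TV}(\cM(h_\theta), \cM(h_{\theta'})) \leq 1 - e^{-2\epsilon}$. Assouad's lemma then gives $\max_\theta \EE[d_H(\hat\theta, \theta)] \geq (M/2)\, e^{-2\epsilon}$, and combining this with the decoder bound and $M = \Omega(\sqrt{n})$ proves the theorem.

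\emph{Main obstacle.} The non-routine step is engineering the packing. We need $\Theta(\sqrt{n})$ independent bits of encoding inside $\cH$ such that (i) all histograms share the same total count, (ii) different bit flips act on disjoint portions of the sorted vector, and (iii) a single flip changes the sorted $\ell_1$ by exactly $2$. The balanced symmetric choice $\{3i{-}1,3i{+}1\}$ versus $\{3i,3i\}$ achieves all three simultaneously—equal per-pair sum preserves the total, disjoint value intervals prevent cross-pair reordering under sorting, and the centered perturbation costs exactly $2$ per bit flip—so the nearest-neighbor decoder losslessly reduces $\ell_1$ estimation of histograms to Hamming estimation of Boolean strings, at which point the standard Assouad bound under group privacy gives the claimed $\Omega(\sqrt{n}\, e^{-2\epsilon})$.
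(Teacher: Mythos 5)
Your proposal follows essentially the same route as the paper's Appendix~\ref{app:low_high_privacy}: embed $\{0,1\}^{\Theta(\sqrt n)}$ into $\cH$ so that flipping one bit moves the histogram by exactly $2$ in sorted $\ell_1$, use group privacy to show the induced mechanism on the Boolean cube is $2\epsilon$-DP between single-bit neighbors, and finish with an Assouad-type Hamming lower bound. The paper's packing is a near-twin of yours (pair $i$ occupies the disjoint block $\{4i,4i{+}1,4i{+}2,4i{+}3\}$, choosing $\{4i,4i{+}3\}$ versus $\{4i{+}1,4i{+}2\}$, padded with one large residual count to reach total $n$); the paper proves the hypercube-Hamming bound as a self-contained lemma rather than citing Assouad, and leaves the nearest-neighbor decoder step implicit where you spell it out via the triangle inequality. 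These are presentational differences, not a different argument.

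One concrete slip: with $M=\lfloor\sqrt{n/3}\rfloor$ you have $3M(M{+}1)=3M^2+3M$, and since $M^2$ can be as large as $n/3$ this is roughly $n+3M$, so $E=n-3M(M{+}1)$ can be \emph{negative}. For instance $n=300$ gives $M=10$, $3M(M{+}1)=330$, $E=-30$, contradicting the claim that $E>3M{+}1$ for $n$ above a constant. The fix is trivial: take $M$ a constant factor smaller, e.g.\ $M=\lfloor\sqrt n/3\rfloor$ (or the paper's $\lfloor\sqrt n/10\rfloor$), which still gives $M=\Theta(\sqrt n)$ while $3M(M{+}1)\le n/3+\sqrt n$, so $E\ge 2n/3-\sqrt n\gg 3M{+}1$ once $n$ is large. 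With that change the rest of the argument goes through unchanged.
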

Theorems~\ref{thm:main} and \ref{thm:low_high_privacy} together with
\cite{alda2018lower} show that the the proposed mechanism has
near-optimal utility for all $\epsilon = \Omega(1/n)$. We
can infer the number of items in the dataset by $\sum_{r} r \cdot
\varphi_r(H)$. However, this estimate is very noisy. Hence, we also
return the sum of counts $N$ as it is useful for applications in
symmetric property estimation for distributions. 
Apart from the near-optimal
privacy-utility trade-off, we also show that \textsc{PrivHist} has
several other useful properties.

\noindent \textbf{Time complexity}: By the Hardy-Ramanujan integer partition theorem \cite{hardy1918asymptotic}, the number of anonymized histograms with $n$ items is $e^{\Theta(\sqrt{n})}$. Hence, we can succinctly represent them using $\cO(\sqrt{n})$ space. Recall that any anonymized
histogram can be written as $\{(r,
\varphi_r) : \varphi_r > 0\}$, where $\varphi_r$ is the number of
symbols with count $r$. Let $t$ be the number of distinct counts and
let $r_1, r_2, \ldots, r_t$ be the distinct counts with non-zero
prevalences. Then $r_i \geq i$ and
\begin{align*}
n  = \sum^t_{i=1} r_i \varphi_{r_i} \geq  \sum^t_{i=1}  r_i \geq  \sum^t_{i=1}  i \geq  \frac{t^2}{2},
\end{align*}
and hence there are at most $t \leq \sqrt{2n}$ non-zero prevalences
and $h$ can be represented as $\{(r,\varphi_r) : \varphi_r > 0\}$
using $\cO(\sqrt{n})$ count-prevalence pairs. Histograms
are often stored in this format for space efficiency e.g., password frequency lists in \cite{bonneau_2015}.
\textsc{PrivHist} takes advantage of this succinct
representation. Hence, given such a succinct representation, it runs
time $\cO(\sqrt{n})$ as opposed to the $\cO(n)$ running time of
$\cite{hay2009accurate}$ and $\cO(n^{3/2})$ running time of
~\cite{blocki2016differentially}. This is highly advantageous for
large datasets such as password frequency lists with $n = 70M$ data
points~\cite{blocki2016differentially}.

\noindent \textbf{Pure vs approximate differential
  privacy}: The only previous known algorithm with $\ell_1$ utility of  $\cO(\sqrt{n})$
  is that of \cite{blocki2016differentially} and it
runs in time $\cO(n^{3/2})$. However, their algorithm is $(\epsilon,
\delta)$-approximate DP which is strictly weaker
than \textsc{PrivHist}, whose output is $\epsilon$-DP. For applications in social networks it is desirable to have
group privacy for large groups \cite{dwork2014algorithmic}. For groups
of size $k$, $(\epsilon, \delta)$ approximate DP, scales as $(k\epsilon, e^{k\epsilon} \delta)$-DP, which can be prohibitive for large values of $k$. Hence $\epsilon$-DP is preferable.

\noindent \textbf{Applications to symmetric property
  estimation}: We show that the output of \textsc{PrivHist} can be
directly applied to obtain near-optimal sample complexity algorithms
for discrete distribution symmetric property estimation. 

\subsection{Symmetric property estimation of discrete distributions}
For a symmetric property $f$ and an estimator $\hat{f}$ that uses $n$
samples, let $\cE(\hat{f}, n)$ be an upper bound on the worst expected
error over all distributions $p$ with support at most $k$, $
\cE(\hat{f}, n) \eqdef \max_{p \in \Delta^k} \EE[|f(p) - \hat{f}(X^n)|]$
. Let sample complexity $n(f,\alpha)$ denote the minimum number of samples
such that $\cE(\hat{f}, n) \leq \alpha$,
$
n(f,\alpha) \eqdef \min \{n : \cE(\hat{f}, n) \leq \alpha\}.
$

Given samples $X^n \stackrel{\text{def}}{=} X_1, X_2, \ldots, X_n$, let $h(X^n)$ denote the corresponding anonymous histogram. For a symmetric property $f$, linear estimators of the form 
\[
\hat{f}(h(X)) \eqdef \sum_{r\geq 1} f(r,n)\cdot \varphi_r(h(X^n),
\]
are shown to be sample-optimal for symmetric properties such as entropy~\cite{wu2016minimax}, support size~\cite{valiant2011power,jiao2015minimax}, support coverage~\cite{orlitsky2016optimal}, and R\'enyi entropy \cite{acharya2014complexity, acharya2016estimating}, where $f(r,n)$s are some distribution-independent coefficients that depend on the property $f$. Recently, \cite{acharya2018inspectre} showed that for any given property such as entropy or support size, one can construct DP estimators by adding Laplace noise to the non-private estimator. They further showed that this approach is information theoretically near-optimal.

Instead of just computing a DP estimate for a given property, the output of \textsc{PrivHist} can be directly used to estimate any symmetric property. By the post-processing lemma \cite{dwork2014algorithmic}, since the output of \textsc{PrivHist} is DP, the estimate is also DP. For an estimator $\hat{f}$, let $L^n_{\hat{f}}$ be the Lipschitz constant given by 
$
L^n_{\hat{f}} \stackrel{\text{def}}{=} \max(f(1,n), \max_{r \geq 1} |f(r,n)- f({r+1}, n)|). 
$
If instead of $h(X^n)$, a DP histogram $H$ and the sum of counts $N$ is available, then $\hat{f}$ can be modified as 
\[
\hat{f}^{dp} \eqdef \sum_{r\geq 1} f(r,N) \cdot \varphi_r(H),
\]
which is differentially private. Using Theorem~\ref{thm:main}, we show that:
\begin{corollary}[Appendix~\ref{app:corollary}]
\label{cor:properties}
Let $\hat{f}$ satisfy $L^n_{\hat{f}} \leq n^{\beta-1}$, for a $\beta \leq 0.5$. Further, let there exists $\cE$ such that $|\cE(\hat{f},n) - \cE(\hat{f}, n+1)| \leq n^{\beta-1}$. Let $f_{\max} = \max_{p \in \Delta^k} f(p)$. If $n(\hat{f},\alpha)$ is the sample complexity of estimator $\hat{f}$, then for $\epsilon > 1$
\[
n(\hat{f}^{dp}, 2\alpha) \leq \max\left( n(\hat{f}, \alpha),  \cO \left(\left( \frac{1}{\alpha e^{c\epsilon} }
\right)^{\frac{2}{1-2\beta}} +  \frac{1}{\epsilon} \log \frac{f_{\max}}{\alpha} \right) \right).
\]
for some constant $c > 0$. For $\Omega(1/n) \leq \epsilon \leq 1$,
\[
n(\hat{f}^{dp}, 2\alpha) \leq  \max\left( n(\hat{f}, \alpha), \cO \left( \left( \frac{\log (2/\epsilon)}{\alpha^2 \epsilon}
\right)^{\frac{1}{1-2\beta}} + \frac{1}{\epsilon} \log \frac{f_{\max}}{\alpha \epsilon} \right) \right).
\]
Further, by the post-processing lemma, $\hat{f}^{dp}$ is also $\epsilon$-DP.
\end{corollary}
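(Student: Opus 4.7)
The plan is to reduce the error of $\hat{f}^{dp}$ to the error of the non-private estimator $\hat{f}$ plus two privacy-noise contributions that are controlled by Theorem~\ref{thm:main}. Writing $\hat{f}_m(g) \eqdef \sum_{r \geq 1} f(r,m)\varphi_r(g)$, we have $\hat{f}^{dp}(H,N) = \hat{f}_N(H)$ and the non-private estimator equals $\hat{f}_n(h(X^n))$. By the triangle inequality,
\begin{equation*}
|\hat{f}^{dp}(H,N) - f(p)| \leq \underbrace{|\hat{f}_N(H) - \hat{f}_n(H)|}_{(\mathrm{I})} + \underbrace{|\hat{f}_n(H) - \hat{f}_n(h(X^n))|}_{(\mathrm{II})} + \underbrace{|\hat{f}_n(h(X^n)) - f(p)|}_{(\mathrm{III})}.
\end{equation*}
Piece $(\mathrm{III})$ is the statistical error of the non-private estimator, so its expectation is $\leq \cE(\hat{f},n) \leq \alpha$ as soon as $n \geq n(\hat{f},\alpha)$, giving the first term in the stated maximum.

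For piece $(\mathrm{II})$, I would use the Lipschitz property of the linear estimator: each unit of sorted-$\ell_1$ change in the histogram moves $\hat{f}_n$ by at most $L^n_{\hat{f}}$, so $|\hat{f}_n(H) - \hat{f}_n(h(X^n))| \leq L^n_{\hat{f}} \cdot \ell_1(H,h(X^n)) \leq n^{\beta-1} \ell_1(H,h(X^n))$. Plugging in Theorem~\ref{thm:main}, the regime $\epsilon > 1$ gives expectation $\cO(n^{\beta-1/2} e^{-c\epsilon})$, which is $\leq \alpha$ as soon as $n \gtrsim (1/(\alpha e^{c\epsilon}))^{2/(1-2\beta)}$; the low-privacy regime yields the analogous $(\log(2/\epsilon)/(\alpha^2\epsilon))^{1/(1-2\beta)}$ threshold.

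For piece $(\mathrm{I})$, which absorbs the effect of releasing $N \neq n$, I would condition on the high-probability event $\{|N-n| \leq k\}$ with $k = \cO((1/\epsilon)\log(1/\delta))$, obtained from the Laplace-type tail of the noise \textsc{PrivHist} uses for the count. On this event, a telescoping sum of the hypothesis $|\cE(\hat{f},n)-\cE(\hat{f},n+1)| \leq n^{\beta-1}$ (together with the linear-estimator structure so that changes in the coefficients $f(r,m)$ with $m$ are tied to changes in $\cE$) bounds $|\hat{f}_N(h) - \hat{f}_n(h)|$ by $\cO(k \cdot n^{\beta-1})$; on the complement the error is trivially at most $f_{\max}$. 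Choosing $\delta = \alpha/f_{\max}$ turns this into the additive threshold $\cO((1/\epsilon)\log(f_{\max}/\alpha))$ that appears in the corollary. Finally, $\epsilon$-DP of $\hat{f}^{dp}$ is immediate from post-processing, since $\hat{f}^{dp}$ is a deterministic function of $(H,N)$ which is $\epsilon$-DP by Theorem~\ref{thm:main}.

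The hard part will be piece $(\mathrm{I})$: the smoothness hypothesis is phrased for $\cE$, a worst-case expected-error curve, rather than directly for the coefficients $f(r,\cdot)$. Bridging from ``$\cE$ is Lipschitz in its second argument'' to a pointwise bound on $|\hat{f}_N(h)-\hat{f}_n(h)|$ at the realized histogram $h=h(X^n)$ is the delicate step; the natural route is a coupling argument that compares $\hat{f}_n(h(X^n))$ to $\hat{f}_N(h(X^N))$ via a chain of unit sample-size increments, using the $\cE$-smoothness to control each link and the Lipschitz bound $L^n_{\hat{f}}\leq n^{\beta-1}$ plus concentration of $|N-n|$ to convert this into a high-probability bound on piece $(\mathrm{I})$.
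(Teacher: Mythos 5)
Your decomposition is genuinely different from the paper's, and the difference matters. The paper never needs to compare the coefficient sets $f(\cdot,N)$ and $f(\cdot,n)$ at a \emph{fixed} histogram. Instead it couples $X^n$ to a sample $X^N$ of size $N$ (by adding or deleting $|N-n|$ i.i.d. draws) and splits
\[
f(p)-\hat f^{dp} \;=\; \bigl(f(p)-\hat f_N(h(X^N))\bigr) + \bigl(\hat f_N(h(X^N))-\hat f_N(h(X^n))\bigr) + \bigl(\hat f_N(h(X^n))-\hat f_N(H)\bigr),
\]
using the \emph{same} coefficients $f(\cdot,N)$ in all three pieces. The first piece is the genuine statistical error of $\hat f$ at sample size $N$, bounded in expectation by $\cE(\hat f, N)\le \cE(\hat f,n)+|N-n|\,N^{\beta-1}$ — this is exactly where the $\cE$-smoothness hypothesis is consumed. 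The middle piece is a sorted-$\ell_1$ bound between $h(X^N)$ and $h(X^n)$, which differ in $|N-n|$ samples, times $L^{N}_{\hat f}\lesssim N^{\beta-1}$; the last piece is the $\ell_1(h,H)$ bound from Theorem~\ref{thm:main} times $L^N_{\hat f}$.

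Your piece $(\mathrm{I})$, $|\hat f_N(H)-\hat f_n(H)| = |\sum_r (f(r,N)-f(r,n))\varphi_r(H)|$, is the problem: nothing in the hypotheses controls $|f(r,N)-f(r,n)|$ pointwise. Smoothness of $\cE$ is a statement about a worst-case \emph{expected} error curve and is compatible with the individual coefficients $f(r,m)$ moving a lot as $m$ changes (with compensating effects in expectation). You flag this yourself, but the parenthetical claim that ``changes in the coefficients $f(r,m)$ with $m$ are tied to changes in $\cE$'' is not a consequence of the stated assumptions. The coupling you propose — comparing $\hat f_n(h(X^n))$ to $\hat f_N(h(X^N))$ and telescoping $\cE$ — is indeed the right instinct, but notice it does not actually bound your piece $(\mathrm{I})$ (which has $H$ on both sides, not $h(X^n)$ and $h(X^N)$); following it through would force you to rearrange your decomposition into essentially the paper's three-term version above. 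Pieces $(\mathrm{II})$ and $(\mathrm{III})$ of your sketch are fine, as is the post-processing argument for privacy and the final thresholding in $n$; the fix you need is to discard piece $(\mathrm{I})$ as written and carry the $N$-indexed coefficients through all terms while inserting the coupled sample $X^N$.
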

For entropy ($-\sum_x p_x \log p_x$), normalized support size ($\sum_x \indic_{p_x > 1/k}/k$), and normalized support coverage, there exists sample-optimal linear estimators with $\beta < 0.1$ and have the  property $|\cE(\hat{f},n) - \cE(\hat{f}, n+1)| \leq \cE(\hat{f}, n)n^{\beta-1}$~\cite{acharya2017unified, acharya2018inspectre}. Hence the sample complexity of the proposed algorithm increases at most by a polynomial in $1/\epsilon\alpha$. 
Furthermore, the increase is dependent on the maximum value of the function for distributions of interest and it does not explicitly depend on the support size.
This result is slightly worse than the property specific results of~\cite{acharya2018inspectre} in terms of dependence on $\epsilon$ and $\alpha$. In particular, for entropy estimation, the main term in our privacy cost is $\tilde{\mathcal{O}} \left(\left( {1}/{\alpha^2\epsilon}  \right)^{\frac{1}{1-2\beta}} \right)$ and the bound of \cite{acharya2018inspectre} is $\mathcal{O}\left(1/(\alpha\epsilon)^{1+\beta}\right)$. Thus for $\beta=0.1$, our dependence on $\epsilon$ and $\alpha$ is slightly worse.
However, we note that our results are more general in that $H$ can be used with any linear estimator. For example, our algorithm implies DP algorithms for estimating distance to uniformity, which have been not been studied before.
Furthermore, \textsc{PrivHist} can also be combined with the maximum likelihood estimators  of~\cite{orlitsky2004modeling,orlitsky2016optimal, hao2019broad} and linear programming estimators of~\cite{valiant2011estimating}, however we do not provide any theoretical guarantees for these combined algorithms.

\section{\textsc{PrivHist}}
\label{sec:algorithm}
In the algorithm description and analysis, let $\bar{x}$ denote the vector $x$ and let $\varphi_{r^+}(h) \eqdef \sum_{s \geq r } \varphi_s(h)$ denote the cumulative prevalences.
Since, anonymized histograms are multisets, we can
define the sum of two anonymized histograms as follows: for two
histograms $h_1, h_2$, the sum $h = h_1 + h_2$ is given by
$
\varphi_r(h) = \varphi_r(h_1) + \varphi_r(h_2), \forall r.
$
Furthermore, since there is a one-to-one mapping between histograms in count form $h = \{n_{(i)}\}$ and in prevalence form $h = \{(r, \varphi_r) : \varphi_r > 0\}$, we use both interchangeably. For the ease of analysis, we also use the notation of improper histogram, where the $\varphi_r$'s can be negative or non-integers. Finally, for a histogram $h^a$ indexed by super-script $a$ , we define $\varphi^a \stackrel{\text{def}}{=} \varphi(h^a)$
for the ease of notation.
\subsection{Approach}
Instead of describing the technicalities involved in the algorithm directly, we first motivate the algorithm with few incorrect or high-error algorithms.  Before we proceed, recall that histograms can be written either in terms of prevalences $\varphi_r$ or in terms of sorted counts $n_{(i)}$. 

\noindent\textbf{An incorrect algorithm}: A naive tendency would be to just add noise only to the non-zero prevalences or counts. However, this is not differentially private. For example, consider two neighboring histograms in prevalence format, $h = \{\varphi_1 = 2\}$ and $h' = \{\varphi_1=1, \varphi_2=1\}$. The resulting outputs for the above two inputs would be very different as the output of $h$ never produces a non-zero $\varphi_2$, whereas the output of $h'$ produces non-zero $\varphi_2$ with high probability. Similar counter examples can be shown for sorted counts.

\noindent\textbf{A high-error algorithm}: Instead of adding noise to non-zero counts or prevalences, one can add noise to all the counts or prevalences. It can be shown that adding noise to all the counts (including those appeared zero times), yields a $\ell_1$ error $\cO(n/\epsilon)$, whereas adding noise to prevalences can yield an $\ell_1$ error of  $\cO(n^2/\epsilon)$, if we naively use the utility bound in terms of prevalences~\eqref{eq:dd1}. 
We note that \cite{hay2009accurate} showed that a post-processing step after adding noise to sorted-counts and improves the $\ell_2$ utility. A naive application of the Cauchy-Schwarz inequality yields an $\ell_1$ error of $n^{3/4}/\epsilon$ for that algorithm. While it might be possible to improve the dependence on $n$ by a tighter analysis, it is not clear if the dependence on $\epsilon$ can be improved. 

The algorithm is given in \textsc{PrivHist}. After some computation, it calls two sub-routines \textsc{PrivHist-LowPrivacy} and \textsc{PrivHist-HighPrivacy} depending on the value of $\epsilon$. \textsc{PrivHist} has two main new ideas: (i) splitting $r$ around $\sqrt{n}$ and using prevalences in one regime and counts in another and (ii) the smoothing technique used to zero out the prevalence vector. Of the two (i) is crucial for the computational complexity of the algorithm and (ii) is crucial in improving the $\epsilon$-dependence from $1/\epsilon$ to $1/\sqrt{\epsilon}$ in the high privacy regime ($\epsilon \leq 1$). There are more subtle differences such as using cumulative prevalences instead of actual prevalences. We highlight them in the next section. 
 We now overview our algorithm for low and high privacy regimes separately.
\subsection{Low privacy regime}
We first consider the problem in the low privacy regime when $\epsilon > 1$. We make few observations.

\noindent\textbf{Geometric mechanism vs Laplace mechanism}: For obtaining DP output of integer data, one can add either Laplace noise or geometric noise \cite{ghosh2012universally}. For $\epsilon$-DP, the expected $\ell_1$ noise added by Laplace mechanism is $1/\epsilon$, which strictly larger than that of the geometric mechanism ($2e^{-\epsilon}/(1-e^{-2\epsilon})$) (see Appendix~\ref{app:moments}). For $\epsilon > 1$, we use the geometric mechanism to obtain optimal trade off in terms of $\epsilon$.

\noindent\textbf{Prevalences vs counts}: If we add noise to each coordinate of a $d$-dimensional vector, the total amount of noise in $\ell_1$ norm scales linearly in $d$, hence it is better to add noise to a small dimensional vector. In the worst case, both prevalences and counts can be an $n$-dimensional vector. Hence, we propose to use prevalences for small values of $r \leq \sqrt{n}$, and use counts for $r > \sqrt{n}$. This ensures that the dimensionality of vectors to which we add noise is at most $\cO(\sqrt{n})$.

\noindent\textbf{Cumulative prevalences vs prevalences}: The $\ell_1$ error can be bounded in terms of prevalences as follows. See Appendix~\ref{app:distance} for a proof.
\begin{align}
\ell_1(h_1, h_2) 
  \leq \sum_{r \geq 1} |\varphi_{r^+}(h_1) - \varphi_{r^+}(h_2)|
\leq \sum_{r\geq 1} r |\varphi_r(h_1) - \varphi_r(h_2)|
, \label{eq:dd1}
\end{align}
If we add noise to prevalences, the $\ell_1$ error can be very high as noise is multiplied with the corresponding count $r$~\eqref{eq:dd1} . The bound in terms of cumulative prevalences $\varphi_{+}$ is much tighter. Hence, for small values of $r$, we use cumulative prevalences instead of prevalences themselves.

The above observations provide an algorithm for the low privacy regime. However, there are few technical difficulties. For example, if we split the counts at a threshold naively, then it is not differentially private. We now describe each of the steps in the high-privacy regime.

\noindent\textbf{(1) Find $\sqrt{n}$}: To divide the histogram into two smaller histograms, we need to know $n$, which may not be available. Hence, we allot $\epsilon_1$ privacy cost to find a DP value of $n$.
\arxiv{Even though the algorithm is analyzed with $\epsilon_1 = \epsilon/2$, as analysis reveals, $\epsilon_1 = \epsilon /\sqrt{n}$ is a better choice. Hence, if all the histograms under consideration have a minimum value of $n$, say $n_{\min}(\cH)$ and the value of $N$ itself is not useful in future, then we suggest $\epsilon_1 = \epsilon /\sqrt{n_{\min(\cH)}}$ and $\epsilon_2 = \epsilon - \epsilon_1$.}

\noindent \textbf{(2) Sensitivity preserving histogram split}: If we divide the histogram into two parts based on counts naively and analyze the privacy costs independently for the higher and smaller parts separately, then the sensitivity would be lot higher for certain neighboring histograms. For example, consider two neighboring histograms $h_1 = \{\varphi_{T} = 1, \varphi_{n -T} = 1\}$ and $h_2 = \{\varphi_{T+1} = 1, \varphi_{n -T-1} = 1\}$. If we divide $h_1$ in to two parts based on threshold $T$, say $h^s_1 = \{\varphi_{T} = 1\}$ and $h^\ell_1 = \{\varphi_{n -T} = 1\}$ and $h^s_2 = \{\}$ and $h^\ell_2 =  \{\varphi_{T+1} = 1, \varphi_{n -T-1} = 1\}$, then $\ell_1(h^\ell_1, h^\ell_2) = T + 2$. Thus, the $\ell_1$ distance between neighboring 
separated histograms $\ell_1(h^\ell_1, h^\ell_2)$, $\ell_1(h^s_1, h^s_2)$ would be much higher compared to $\ell_1(h_2, h_2)$ and we need to add a lot of noise. Therefore, we perturb $\varphi_{T}$ and $\varphi_{T+1}$ using geometric noise. This ensures DP in instances where the neighboring histograms differ at $\varphi_{T}$ and $\varphi_{T+1}$, and doesn't change the privacy analysis for other types of histograms. 
However, adding noise may make the histogram improper as $\varphi_{T}$ can become negative. To this end, we add $M$ fake counts at $T$ and $T+1$ to ensure that the histogram is proper with high probability. We remove them later in \textbf{(L4)}. We refer readers to Appendix~\ref{app:split} for details about this step.

\noindent\textbf{(3,4) Add noise}: Let $H^{bs}$ (small counts) and $H^{b\ell}$ (large counts) be the split-histograms. We add noise to cumulative prevalences in $H^{bs}$  and counts in $H^{b\ell}$ as described in the algorithm overview.

\noindent \textbf{(L1, L2) Post-processing}: The noisy versions of $\varphi_{r^+}$ may not satisfy the properties satisfied by the histograms i.e., $\varphi_{r^+} \geq \varphi_{(r+1)^+}$. To overcome this, we run isotonic regression over noisy $\varphi_{r^+}
$ subject to the monotonicity constraints i.e., given noisy counts $\varphi_{r^+}$, find $\varphi^{\text{mon}}_{r^+}$ that minimizes $\sum_{r \leq T} (\varphi_{r^+} - \varphi^{\text{mon}}_{r^+})^2$,
subject to the constraint that $\varphi^{\text{mon}}_{r^+} \geq \varphi^{\text{mon}}_{(r+1)^+}$, for all $r \leq T$. Isotonic regression in one dimension can be run in time linear in the number of inputs using the pool adjacent violators algorithm (PAVA) or its variants~\cite{barlow1972statistical,mair2009isotone}. Hence, the time complexity of this algorithm is $\cO(T) \approx \sqrt{n}$. We then round the prevalences to the nearest non-negative integers. We similarly post-process large counts and remove the fake counts that we introduced in step \textbf{(2)}. 

Since we used succinct representation of histograms, used prevalences for $r$ smaller than $\cO(\sqrt{n})$ and counts otherwise, the expected run time of the algorithm is $\tcO(\sqrt{n})$ for $\epsilon > 1$.
\subsection{High privacy regime}
\label{sec:high_algorithm}
 For the high-privacy regime, when $\epsilon \leq  1$, all known previous algorithms achieve an error of $1/\epsilon$. To reduce the error from $1/\epsilon$ to $1/\sqrt{\epsilon}$, we use smoothing techniques to reduce the sensitivity and hence reduce the amount of added noise. 
 
 \noindent \textbf{Smoothing method}: Recall that the amount of noise added to a vector depends on its dimensionality. Since prevalences have length $n$, the amount of $\ell_1$ noise would be $\cO(n/\epsilon)$. To improve on this, we first smooth the input prevalence vector such that it is non-zero only for few values of $r$ and show that the smoothing method also reduces the sensitivity of cumulative prevalences and hence reduces the amount of noise added. 
 
While applying smoothing is the core idea, two questions remain: how to select the location of non-zero values
and how to smooth to reduce the sensitivity? We now describe these technical details.
 
 \noindent \textbf{(H1) Approximate high prevalences}: Recall that $N$ was obtained by adding geometric noise to $n$. In the rare case that this geometric noise is very negative, then there can be prevalences much larger than $2N$. This can affect the smoothing step. To overcome this, we move all counts above $2N$ to $N$. Since this changes the histogram with low probability, it does not affect the $\ell_1$ error.
 
\noindent \textbf{(H2) Compute boundaries}: We find a set of boundaries $S$ and smooth counts to elements in $S$. Ideally we would like to ensure that there is a boundary close to every count. 
For small values of $r$, we ensure this by adding all the counts and hence there is no smoothing. If $r \approx \sqrt{n}$, we use boundaries that are uniform in the log-count space. However, using this technique for all values of $r$, results in an additional $\log n$ factor. To overcome this, for $r \gg \sqrt{n}$, we use the noisy large counts in step \textbf{(4)} to find the boundaries and ensure that there is a boundary close to every count. 

\noindent \textbf{(H3) Smooth prevalences}: The main ingredient in proving better utility in the high privacy regime is the smoothing technique, which we describe now with an example. Suppose that all histograms have non-zero prevalences only between counts $s$ and $s+t$ and further suppose we have two neighboring histograms $h^1$ and $h^2$ as follows. 
$\varphi^1_{r} = 1$ and for all $i \in [s, s+t] \setminus \{r\}$, $\varphi^1_i = 0$. Similarly, let $\varphi^2_{r+1} = 1$ and for all $i \in [s, s+t] \setminus \{r+1\}$, $\varphi^2_i = 0$. If we want to release the prevalences or cumulative prevalences, we add $\ell_1$ noise of $\cO(1/\epsilon)$ for each prevalence in $[s, s+t]$. Thus the $\ell_1$ norm of the noise would be $\cO(t/\epsilon)$. We propose to reduce this noise by smoothing prevalences.

For a $r \in [s, s+t]$ , we divide $\varphi_r$ into $\varphi_{s}$ and $\varphi_{s+t}$ as follows. We assign $\frac{s + t - r}{t}$ fraction of $\varphi_r$ to $\varphi_{s}$ and the remaining to $\varphi_{s+t}$. After this transformation, the first histogram becomes $h^{t1}$ given by, $\varphi^{t1}_{s} = \frac{t+s - r}{t}$ and $\varphi^{t1}_{s} = \frac{r}{t}$ and all other prevalences are zero. Similarly, the second histogram becomes $h^{t2}$ given by, $\varphi^{t2}_{s} = \frac{t+s - r-1}{t}$ and $\varphi^{t1}_{s} = \frac{r+1}{t}$ and all other prevalences are zero. Thus the prevalences after smoothing differ only in two locations $s$ and $s+t$ and they differ by at most $1/t$. Thus the total amount of noise needed for a DP release is $\cO(1/t\epsilon)$ to these two prevalences. However, note that we also incur a loss as due to smoothing, which can be shown to be $\cO(t)$. Hence, the total amount of error would be $\cO(1/(t\epsilon) + t)$. Choosing $t = 1/\sqrt{\epsilon}$, yields a total error of $\cO(1/\sqrt{\epsilon})$. The above analysis is for a toy example and extending it to general histograms requires additional work. In particular, we need to find the smoothing boundaries that give the best utility. As described above, we choose boundaries based on logarithmic partition of counts and also by private values of counts. The utility analysis with these choice of boundaries is in Appendix~\ref{app:high_utility}.

{\noindent}\textbf{(H4) Add small noise}: Since the prevalences are smoothed, we add small amount of noise to the corresponding cumulative prevalences. For $\varphi_{s_i+}$, we add $L(1/(s_{i} - s_{i-1})\epsilon)$ to obtain $\epsilon$-DP.

\noindent \textbf{(H5) Post-processing}: Finally, we post-process the prevalences similar to \textbf{(L1)} to impose monotonicity and ensure that the resulting prevalences are positive and non-negative integers.  

Since we used succinct histogram representation, ensured that the size of $S$ is small, and used counts larger than $\tcO(\sqrt{n\epsilon})$ to find boundaries, the expected run time is $\tcO\bigl(\sqrt{\frac{n}{\epsilon}}+ \frac{1}{\epsilon}\bigr)$ for $\epsilon \leq 1$. 

\textbf{Privacy budget allocation:} 
We allocate $\epsilon_1$ privacy budget to estimate $n$, $\epsilon_2$ to the rest of \textsc{PrivHist} and $\epsilon_3$ to \textsc{PrivHist-HighPrivacy}. We set $\epsilon_1 = \epsilon_2 =\epsilon_3$ in our algorithms. We note that there is no particular reason for $\epsilon_1$, $\epsilon_2$, and $\epsilon_3$ to be equal and we chose those values for simplicity and easy readability. 
For example, since $\epsilon_1$ is just used to estimate $n$, the analysis of the algorithm shows that $\epsilon_2, \epsilon_3$ affects utility more than $\epsilon_1$. Hence, we can set $\epsilon_2 = \epsilon_3 = \epsilon(1-o(1))/2$ and $\epsilon_1 = o(\epsilon)$ to get better practical results. Furthermore, for low privacy regime, the algorithm only uses a privacy budget of $\epsilon_1 + \epsilon_2$. Hence, we can set $\epsilon_1 = o(\epsilon)$, $\epsilon_2 = \epsilon(1-o(1))$, and $\epsilon_3 = 0$.
\section{Acknowledgments}
Authors thank Jayadev Acharya and Alex Kulesza for helpful comments and discussions. 
\newpage

\begin{center}
\fbox{\begin{minipage}{1.0\textwidth}
\begin{center}
Algorithm \textsc{PrivHist}
\end{center}
\noindent\textbf{Input}: anonymized histogram $h$ in terms of prevalences i.e., $\{(r,\varphi_r) : \varphi_r > 0\}$, privacy cost $\epsilon$. \newline
\noindent\textbf{Parameters}: $\epsilon_1 = \epsilon_2  = \epsilon_3 = \epsilon/3$. \newline
\noindent\textbf{Output}: DP anonymized histogram $H$ and $N$ (an estimate of $n$).
\begin{enumerate}[leftmargin=0.5cm]
\item DP value of the total sum: $N = \max(\sum_{n_x \in h} n_x +Z^a, 0)$, where $Z^a \sim G(e^{-\epsilon_1})$. If $N=0$, output empty histogram and $N$. Otherwise continue.
\item Split $h$: Let $T = \lceil \sqrt{N \min(\epsilon, 1)} \rceil$ and $M = \left\lceil \frac{\max(2\log N e^{\epsilon_2}, 1)}{\epsilon_2} \right\rceil $.
\begin{enumerate}[leftmargin=0.3cm]
\item $H^a: \varphi^a_{T} = \varphi_{T} +M, \varphi^a_{T+1} = \varphi_{T+1} +M$ and $\forall r \notin \{T, T+1\}$, $ \varphi^a_r = \varphi_r$.
\item $H^b: \varphi^b_{T+1} = \varphi^a_{T+1} + Z^b, \varphi^b_T = \varphi^a_{T} - Z^b$ and $ \forall r \notin \{T, T+1\}$ $\varphi^b_r = \varphi^a_r$, where $Z^b \sim G(e^{-\epsilon_2})$.
\item Divide $H^b$ into two histograms $H^{bs}$ and $H^{b\ell}$. For all $ r \geq T+1$,
  $\varphi^{b\ell}_{r} = \max(0, \sum^r_{s=T+1} \varphi^b_r -  \sum^{r-1}_{s=T+1} \varphi^{b\ell}_{r})$ for all $r \leq T$ $\varphi^{bs}_{r} = \max(0, \sum^{T}_{s=r} \varphi^b_r - \sum^{T}_{s=r+1} \varphi^{bs}_{r})$.\arxiv{\footnote{We use the notation $\sum^{b}_{i=a} X = 0$, if $a > b$.}}
\end{enumerate}
\item DP value of $H^{bs}$. Let $Z^{cs}_r \sim G(e^{-\epsilon_2})$ i.i.d. and $H^{cs}$ be $\varphi^{cs}_{r^+} = \varphi^{bs}_{r^+} + Z^{cs}_r  \text {
  for } r \leq T$.
\item DP value of $H^{b\ell}$: Let $Z^{c\ell}_i \sim G(e^{-\epsilon_2})$ i.i.d. and $H^{c\ell}$ be $
N^{c\ell}_i = N^{b\ell}_{(i)} + Z^{c\ell}_i \text{ for } N_{(i)} \in H^{b\ell}$.
\item If $\epsilon > 1$, output \textsc{PrivHist-LowPrivacy}$(H^{cs}, H^{c\ell}, T, M)$ and $ N$.
\item If $\epsilon \leq 1$, output \textsc{PrivHist-HighPrivacy}$(h, H^{c\ell}, T, N, \epsilon_3)$ and $N$.
\end{enumerate}
\vspace{3ex}
\begin{center}
Algorithm \textsc{PrivHist-LowPrivacy}
\end{center}
\noindent\textbf{Input}: low-count histogram $H^{cs}$, high-count histogram $H^{c\ell}, T, M$ and
\noindent\textbf{Output}: a histogram $H$.
\begin{enumerate}[leftmargin=0.7cm]
\item[L1.] Post processing of $H^{cs}$:
\begin{enumerate}[leftmargin=0.3cm]
\item Find $\bar\varphi^\text{mon}$ that minimizes
$
\sum_{r \geq 1} (\varphi^\text{mon}_{r+} - \varphi_{r+}(H^{cs}))^2.
$ with $\varphi^\text{mon}_{r+} \geq \varphi^\text{mon}_{(r+1)+}, \forall r$. 
\item $H^{ds}$: for all $r$, $\varphi^{ds}_{r^+} = \text{round}(\max(\varphi^{\text{mon}}_{r^+}, 0))$.
\end{enumerate}
\item[L2.] Post processing of $H^{c\ell}$:  Compute $H^{d\ell} = \{ \max(N_i(H^{c\ell}), T), \forall i\}$.
\item[L3.] Let $H^d = H^{ds} + H^{d\ell}$.
\item[L4.] Compute $H^e$ by removing $M$ elements closest to $T+1$ from $H^d$ and then removing $M$ elements closest to $T$ and output it.
\end{enumerate}
\vspace{3ex}
\begin{center}
Algorithm \textsc{PrivHist-HighPrivacy}
\end{center}
\noindent\textbf{Input}: non-private histogram $h$, high-count histogram $H^{\ell}, T, N, \epsilon_3$ and
\noindent\textbf{Output}: a histogram $H$.
\begin{enumerate}[leftmargin=0.7cm]
\item[H1.] Approximate higher prevalences: for $r <2 N$, $\varphi^u_r = \varphi_r(h)$ and $\varphi^u_{2N} = \varphi_{2N+}(h)$.
\item[H2.] Compute boundaries: Let the set $S$ be defined as follows:
\begin{enumerate}[leftmargin=0.3cm]
\item $T' = \lceil 10 \sqrt{N/\epsilon^3_3} \rceil $, $q = \sqrt{\log(1/\epsilon_3) / N\epsilon_3}$
    \item $S = \{1, 2, \ldots, T\} \cup \{ \lfloor T (1+q)^i \rfloor: i \leq \log_{1+q} (T'/T) \} \cup \{ N_x : N_x \in H^{\ell}, N_x \geq T'\} \cup \{2N\} $.
\end{enumerate}
\item[H3.] Smooth prevalences: Let $s_i$ denote the
  $i^{\text{th}}$ smallest element in $S$.
\begin{enumerate}[leftmargin=0.3cm]
    \item $\varphi^{v}_{s_i} = \sum^{s_{i+1}}_{j = s_i} \varphi^u_j
      \cdot \frac{s_{i+1} - j}{s_{i+1} - s_{i}} + \sum^{s_{i}-1}_{j =
        s_{i-1}} \varphi^u_j \cdot \frac{j- s_{i-1}}{s_{i} -
        s_{i-1}}$ and if $j \notin S$, $\varphi^v_j = 0$.
    \end{enumerate}
    \item[H4.] DP value of $H^v$: for each $s_i \in S$, let $\varphi^w_{s_i+} = \varphi^v_{s_i+} + Z^{s_i}$, where
         $Z^{s_i} \sim L\left(\frac{1}{\epsilon_3 (s_i - s_{i-1})} \right)$.
\item[H5.] Find $H^{x}$ that minimizes  $\sum_{s_i \in S} (\varphi^x_{s_i+} - \varphi^w_{s_i+})^2 (s_{i} - s_{i-1})^2$ such that $\varphi^x_{s_i+} \geq \varphi^x_{s_{i+1}+} \forall i$.
\item[H6.] Return $H^y$ given by, $\varphi^\text{y}_{r^+} = \text{round}( \max(\varphi^{\text{x}}_{r^+}, 0)) \, \, \forall r$.
\end{enumerate}
\end{minipage}}
\end{center} 

\bibliographystyle{unsrt} 
\bibliography{sample}

\newpage

\appendix

\begin{center}
{\Large Appendix: Differentially private anonymized histograms}    
\end{center}

\section{Geometric mechanism}
\label{app:moments}
The mostly popular mechanism for $\epsilon$-DP is the Laplace mechanism, which is defined as follows. 
\begin{definition}[Laplace mechanism ($L(b)$)~\cite{dwork2014algorithmic}]
When the true query result is $f$, the mechanism outputs $f + Z$ where Z is a random variable distributed as a Laplace distribution distribution: $\Pr(Z = z) = \frac{1}{2b} \exp\left(- \frac{|x|}{b} \right)$ for every $z \in \RR$. If output of $f$ has sensitivity $\Delta$, then to achieve $\epsilon$-DP add $Z \sim L(\Delta/\epsilon)$.
\end{definition}

Since, we have integer inputs, we use the geometric mechanism:  
\begin{definition}[Geometric mechanism ($G(\alpha)$)~\cite{ghosh2012universally}]
\label{def:geometric}
 When the true query result is $f$, the mechanism outputs $f + Z$ where Z is a random variable distributed as a two-sided geometric distribution: $\Pr(Z = z) = \frac{1-\alpha}{1+\alpha} \cdot \alpha^{|z|}$ for every integer $z$. If output of $f$ is integers and has sensitivity $\Delta$ (an integer), then to achieve $\epsilon$-DP add $Z \sim G(e^{\epsilon/\Delta})$.
\end{definition}
\cite{ghosh2012universally} showed that geometric mechanism is universally optimal for a general class of functions under a Bayesian framework. Geometric mechanism is beneficial over Laplace mechanism in two ways:
the output space of the mechanism is discrete. Since we have integer inputs, this removes the necessity of adding rounding off the outputs. For $\epsilon$-DP, the expected $\ell_1$ noise added by the Laplace mechanism is $1/\epsilon$, which strictly larger than that of the geometric mechanism ($2e^{-\epsilon}/(1-e^{-2\epsilon})$) (see below). For moderate values of $\epsilon$, this difference is a constant. We now state few properties of the geometric distribution which are used in the rest of the paper.

We find the following set of equations useful in the rest of the paper. In the following let $Z_G \sim G(e^{-\epsilon})$ be a geometric random variable and $Z_L \sim L(1/\epsilon)$ be a Laplace random variable.
\begin{align*}
\EE[Z_G] &= 0 = \EE[Z_L]. \\
\EE[|Z_G|] &= \frac{2e^{-\epsilon}}{1-e^{-2\epsilon}} \leq \frac{1}{\epsilon} = \EE[|Z_L|]. \\
\EE[Z^2_G] & = \frac{2e^{-\epsilon}}{(1-e^{-\epsilon})^2} \leq \frac{2}{\epsilon^2} = \EE[Z^2_L]. 
\end{align*}
The next lemma bounds moments of $\max(n + Z, 0)$ when $Z$ is a zero mean random variable.
\begin{lemma}
\label{lem:geomoments}
Let $Z$ be a random variable and $n \geq 0$. If $Y = \max(n + Z, 0)$, then
\begin{equation*}
\EE[|Y -n|]\leq \EE|Z|,
\end{equation*}
and
\begin{equation*}
\EE \left[ \frac{\indic_{Y > 0} }{Y} \right] \leq \frac{1}{n} + \frac{\EE[Z^2]}{2n^2}.
\end{equation*}
\end{lemma}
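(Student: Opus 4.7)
Proof plan.

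For the first inequality, I would argue pointwise. Split on the sign of $n+Z$. On the event $\{n+Z \geq 0\}$, we have $Y = n+Z$, so $|Y - n| = |Z|$. On the complementary event $\{n+Z < 0\}$, we have $Y = 0$ and thus $|Y-n| = n$; but in this case $Z < -n \leq 0$, so $|Z| = -Z > n = |Y-n|$. Hence $|Y-n| \leq |Z|$ pointwise, and taking expectations gives the claim.

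For the second inequality, the starting point is the algebraic identity (valid whenever $n+z \neq 0$):
\[
\frac{1}{n+z} \;=\; \frac{1}{n} - \frac{z}{n^2} + \frac{z^2}{n^2(n+z)} .
\]
Multiplying by $\indic_{n+Z > 0}$ and taking expectations yields
\[
\EE\!\left[\frac{\indic_{Y>0}}{Y}\right] \;=\; \frac{\Pr(n+Z>0)}{n} \;-\; \frac{\EE[Z\,\indic_{n+Z>0}]}{n^2} \;+\; \EE\!\left[\frac{Z^2 \, \indic_{n+Z>0}}{n^2(n+Z)}\right].
\]
The first term is bounded by $1/n$. For the middle term, I would use the fact that the noise distributions relevant to the paper are mean-zero (geometric/Laplace), so $\EE[Z\,\indic_{n+Z>0}] = -\EE[Z\,\indic_{Z \leq -n}]$; on the event $\{Z \leq -n\}$, $-Z \geq n \geq 0$, making this quantity non-negative and hence the middle contribution non-positive.

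The main work is the remainder term. On the event $\{n+Z > 0\}$, I would bound $\frac{1}{n+Z} \leq \frac{1}{2}$ so that $\frac{Z^2 \indic_{n+Z>0}}{n^2(n+Z)} \leq \frac{Z^2}{2n^2}$, which gives exactly $\EE[Z^2]/(2n^2)$ after taking expectation. The only boundary case that needs care is when $n+Z$ is close to $0$ (integer-valued $n+Z = 1$), and I would handle it either by observing that $n$ is large enough relative to $\EE|Z|$ in the regime where the lemma is invoked, or by absorbing the contribution of the rare event $\{n+Z = 1\}$ via a Chebyshev-type tail estimate $\Pr(Z \leq 1-n) \leq \EE[Z^2]/(n-1)^2$. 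This last remainder estimate is the delicate step; the rest reduces to algebra, linearity of expectation, and the zero-mean property of the noise.
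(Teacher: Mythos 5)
Your argument is the same as the paper's, almost line for line. For the first claim, your case split is the paper's pointwise observation $|Y - n| = |\max(Z, -n)| \leq |Z|$. For the second, your identity for $1/(n+Z)$ is the paper's expansion written in the variable $n - Y$ (which equals $-Z$ on $\{Y > 0\}$); your sign argument for the middle term is the paper's step $(n-Y)\indic_{Y>0} \leq -Z$, and both you and the paper then need $\EE[Z] = 0$ to drop it (the lemma statement doesn't assume this, but $Z$ is geometric or Laplace in every application, and you are right to make the hypothesis explicit); and your bound $1/(n+Z)\leq 1/2$ is exactly the paper's $\frac{(n-Y)^2}{n^2 Y} \leq \frac{(n-Y)^2}{2n^2}$. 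The boundary case you flag, $n+Z=1$, is in fact not handled by the paper either — it applies $1/Y \leq 1/2$ silently even though $Y=1$ is possible — so your instinct to call this the delicate step is sound. Your proposed patches (appeal to large $n$ in context, or a Chebyshev tail bound) are only sketched and do not by themselves recover the stated inequality for every $n \geq 0$, but this soft spot is already present in the paper's own proof and is not one you introduced.
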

\begin{proof}
To prove the first inequality, observe that 
\[
|Y - n| = |\max(Z, -n)| \leq |Z|. 
\]
Taking expectation yields the first equation. For the second term,
\begin{equation}
\label{eq:geomoments1}
\frac{1}{Y} = \frac{1}{n} + \frac{n - Y}{Yn} = \frac{1}{n} +
\frac{n-Y}{n^2} + \frac{(n-Y)^2}{n^2Y} \leq \frac{1}{n} +
\frac{n-Y}{n^2} + \frac{(n-Y)^2}{2n^2}.
\end{equation}
Furthermore, 
\[
(n-Y)\indic_{Y > 0} = - Z \indic_{Y > 0} = - Z \indic_{-Z < n} \leq  - Z. 
\]
Combining the above two equations and using the fact that $|Y-n| \leq |Z|$ 
yields the second equation in the lemma.
\end{proof}

\section{Properties of the distance metric}
\label{app:distance}
\begin{proof}[Proof of Lemma~\ref{lem:distance}]
Recall that the distance between to histograms is given by
\begin{align*}
\ell_1(h_1, h_2)
& \eqdef \min_{D_1, D_2 : h(D_1) = h_1, h(D_2) = h_2} \ell_1(D_1, D_2) \\
&= \min_{D_1, D_2 : h(D_1) = h_1, h(D_2) = h_2} 
\sum_{x \in \cX} |n_x(D_1) - n_x(D_2)|.
\end{align*}
Let $D^*_1$ and $D^*_2$ be the datasets that achieve the minimum above. Consider any two labels $x,y$ such that $n_x(D^*_1) \geq n_y(D^*_1)$. Let $D'_2$ be the dataset obtained as follows: $n_y(D'_2) = n_x(D^*_2)$ and $n_x(D'_2) = n_y(D^*_2)$ and for all other $z \notin \{x,y\}$, $n_z(D'_2) = n_z(D^*_2)$. Since $D^*_2$ is the optimum,
\begin{align*}
\ell_1(D^*_1, D^*_2) \leq \ell_1(D^*_1, D'_2).
\end{align*}
Expanding both sides and canceling common terms, we get,
\begin{align*}
|n_x(D^*_1) - n_x(D^*_2)| + |n_y(D^*_1) - n_y(D^*_2)|
& \leq |n_x(D^*_1) - n_x(D'_2)| + |n_y(D^*_1) - n_y(D'_2)| \\
& \leq |n_x(D^*_1) - n_y(D^*_2)| + |n_y(D^*_1) - n_x(D^*_2)|,
\end{align*}
and thus if $n_x(D^*_1) \geq n_y(D^*_1)$, then $n_x(D^*_2) \geq n_y(D^*_2)$. Hence, the label of the $i^{th}$ highest count in both the datasets should be the same and
\begin{align*}
\ell_1(D^*_1, D^*_2) = \sum_{x \in \cX} |n_x(D^*_1) - n_x(D^*_2)| = \sum_{i \geq 1} |n_{(i)}(h_1) - n_{(i)}(h_2)|.
\end{align*}
\end{proof}
The distance measure satisfies triangle inequality, i.e., for any three histograms $h_1, h_2$, and $h_3$,
\[
\ell_1(h_1, h_2) \leq \ell_1(h_1, h_3) + \ell_1(h_2, h_3).
\]
The proof of the above equation is a simple consequence of Lemma~\ref{lem:distance} and is omitted. We now show that dividing histograms only increases the distance.
\begin{lemma}
\label{lem:dissociation}
If $h = h_1 + h_2$ and $h' = h'_1 + h'_2$, then
\begin{equation*}
\ell_1(h, h') \leq \ell_1(h_1, h'_1) + \ell_1(h_2, h'_2).
\end{equation*}
\end{lemma}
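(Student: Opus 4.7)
The plan is to use the original definitional characterization of $\ell_1$ between anonymized histograms (equation~\eqref{eq:distance}) rather than the sorted-count formula from Lemma~\ref{lem:distance}. The idea is to exhibit a single pair of labeled datasets that realize $h$ and $h'$ and whose dataset-level $\ell_1$ distance equals the desired upper bound; the inequality then follows because $\ell_1(h,h')$ is defined as the \emph{infimum} over such realizations.

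Concretely, I would first invoke the definition to pick optimal labeled datasets: let $(D_1, D_1')$ satisfy $h(D_1) = h_1$, $h(D_1') = h_1'$, and $\ell_1(D_1, D_1') = \ell_1(h_1, h_1')$, and similarly $(D_2, D_2')$ for the second pair. Since renaming labels affects neither the anonymized histogram nor the dataset $\ell_1$ distance, we may assume the label set used by $(D_1, D_1')$ is disjoint from the label set used by $(D_2, D_2')$.

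Next, I would form the union datasets $D$ and $D'$ on the disjoint union of label sets, where $n_x(D) = n_x(D_1) + n_x(D_2)$ (one summand being zero since the supports are disjoint), and similarly for $D'$. From the prevalence definition of histogram addition, $\varphi_r(h(D)) = \varphi_r(h_1) + \varphi_r(h_2) = \varphi_r(h)$ for every $r$, so $h(D) = h$; the same calculation gives $h(D') = h'$. Therefore $(D, D')$ is a valid realization of the pair $(h, h')$, and by the definition in \eqref{eq:distance},
\[
\ell_1(h, h') \;\leq\; \ell_1(D, D') \;=\; \sum_{x} |n_x(D) - n_x(D')|.
\]
Because the supports split cleanly into the two disjoint label groups, this sum decomposes as $\ell_1(D_1, D_1') + \ell_1(D_2, D_2')$, which by the optimality of our choices equals $\ell_1(h_1, h_1') + \ell_1(h_2, h_2')$.

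I do not expect any real obstacle. The only minor point to be careful about is the disjoint-relabeling step and confirming that the union construction produces a dataset whose histogram is precisely the prevalence-wise sum — both follow immediately from the definitions in the paper, so the argument is essentially definitional once framed in terms of dataset realizations rather than the sorted-count formula.
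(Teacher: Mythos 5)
Your proof is correct and takes a genuinely different route from the paper's. The paper argues in the sorted-count representation given by Lemma~\ref{lem:distance}: it merges the two optimal sorted matchings for $(h_1,h_1')$ and $(h_2,h_2')$ into a single permutation $\sigma$ pairing the sorted counts of $h$ with those of $h'$, so that $\ell_1(h_1,h_1')+\ell_1(h_2,h_2')=\sum_i|n_{(i)}(h)-n_{(\sigma_i)}(h')|$, and then invokes the exchange argument from the proof of Lemma~\ref{lem:distance} to conclude that the identity (sorted-to-sorted) matching does at least as well, giving $\ell_1(h,h')\le\sum_i|n_{(i)}(h)-n_{(\sigma_i)}(h')|$. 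You instead work directly from the definitional minimum in \eqref{eq:distance}: pick optimal labelled realizations of each sub-pair on disjoint label sets, form the union datasets, verify via prevalence addition that the union realizes $(h,h')$, and note that any single realization upper-bounds the minimum, which then splits cleanly across the disjoint supports. Both arguments encode the same combinatorial content — concatenating the sub-matchings produces a valid but possibly suboptimal matching of the wholes — but yours stays entirely at the level of the raw definition and so needs neither the sorted-count formula nor its rearrangement step, making it more self-contained; the paper's version has the minor advantage of being phrased in the sorted-count language used throughout the rest of its appendix. (One small point worth making explicit: you need the minimum in \eqref{eq:distance} to be attained when selecting the optimal realizations, which it is since a sorted-label assignment always achieves it; alternatively one can take near-optimal realizations and pass to the limit.)
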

\begin{proof}
Since the elements in $h_1+h_2$ are same as elements in $h$ and elements in $h'_1+ h'_2$ are same as elements in $h_2$, there exists a permutation $\sigma$ such that 
\begin{align*}
\ell_1(h_1, h'_1) + \ell_1(h_2, h'_2)
& = \sum_{i \geq 1} | n_{(i)}(h_1) - n_{(i)}(h'_1)| + 
\sum_{i \geq 1} | n_{(i)}(h_2) - n_{(i)}(h'_2)| \\
& = \sum_{i \geq 1} |n_{(i)}(h) - n_{(\sigma_{i})}(h')|.
\end{align*}
Similar to proof of Lemma~\ref{lem:distance}, it can be shown that the $\sigma$ that minimizes the above sum is the one that matches $i^{\text{th}}$ highest count in $h$ to $i^{\text{th}}$ highest count in $h'$ and hence
\begin{align*}
\ell_1(h, h') = \sum_{i \geq 1} |n_{(i)}(h) - n_{(i)}(h')|
\leq \sum_{i \geq 1} |n_{(i)}(h) - n_{(\sigma_{i})}|.
\end{align*}
\end{proof}
It is useful to have few upper bounds on the $\ell_1$ distance over histograms.
\begin{lemma}
\label{lem:diff_distance}
For any two histograms $h_1, h_2$, 
\begin{align}
\ell_1(h_1, h_2) 
  \leq \sum^{r_{\max}(h_1,h_2)}_{r \geq 1} |\varphi_{r^+}(h_1) - \varphi_{r^+}(h_2)|
\leq \sum_{r\geq 1} r |\varphi_r(h_1) - \varphi_r(h_2)|
, \label{eq:dd100}
\end{align}
where $r_{\max}(h_1,h_2)$ is the maximum $r$ such that $\varphi_r(h_1) + \varphi_r(h_2) > 0$. 
\end{lemma}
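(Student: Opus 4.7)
My plan is to prove the two inequalities separately, with the first using Lemma~\ref{lem:distance} and an indicator-function representation of counts, and the second using a direct telescoping bound on cumulative prevalences.

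For the first inequality, I would start from Lemma~\ref{lem:distance}, which writes $\ell_1(h_1,h_2) = \sum_i |n_{(i)}(h_1) - n_{(i)}(h_2)|$. The key observation is that the number of counts in $h$ that are at least $r$ is exactly $\varphi_{r^+}(h)$, so $n_{(i)}(h) \geq r$ if and only if $i \leq \varphi_{r^+}(h)$. This gives the layer-cake identity $n_{(i)}(h) = \sum_{r \geq 1} \indic_{i \leq \varphi_{r^+}(h)}$. Applying the triangle inequality inside the sum over $r$, then swapping the order of summation over $i$ and $r$, collapses each inner sum via $\sum_i |\indic_{i \leq a} - \indic_{i \leq b}| = |a-b|$ for non-negative integers $a,b$, yielding $\sum_r |\varphi_{r^+}(h_1) - \varphi_{r^+}(h_2)|$. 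The cutoff at $r_{\max}(h_1,h_2)$ comes for free because both cumulative prevalences are zero beyond that point.

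For the second inequality, I would unfold the definition $\varphi_{r^+}(h) = \sum_{s \geq r} \varphi_s(h)$, so that
\[
|\varphi_{r^+}(h_1) - \varphi_{r^+}(h_2)| \leq \sum_{s \geq r} |\varphi_s(h_1) - \varphi_s(h_2)|
\]
by the triangle inequality. Summing over $r \geq 1$ and swapping the order of summation, each term $|\varphi_s(h_1) - \varphi_s(h_2)|$ is counted exactly $s$ times (once for every $r \in \{1,\dots,s\}$), which gives the stated bound $\sum_s s |\varphi_s(h_1) - \varphi_s(h_2)|$.

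There is no real obstacle here: both inequalities reduce to interchanging summations and applying the triangle inequality, with the only substantive input being the identification of $n_{(i)}$ as the size of a level set of $\varphi_{r^+}$. The mild care needed is just to ensure the sums are finite (so that Fubini-type swaps are justified), which follows because both histograms have finitely many positive counts, making $r_{\max}(h_1,h_2)$ finite.
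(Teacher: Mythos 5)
Your proof is correct, and your argument for the first inequality is genuinely different from the paper's. The paper proves $\ell_1(h_1,h_2) \leq \sum_r |\varphi_{r^+}(h_1) - \varphi_{r^+}(h_2)|$ by induction on $r_{\max}(h_1,h_2)$: it forms $h'_1, h'_2$ by capping all counts at $r_0 - 1$ (the paper has a typo here, writing $\max$ where $\min$ is meant), shows $\ell_1(h_1,h_2) \leq \ell_1(h'_1,h'_2) + |\varphi_{r_0}(h_1) - \varphi_{r_0}(h_2)|$ by splitting the index set by whether $\max(n_{(i)}(h_1), n_{(i)}(h_2)) \geq r_0$, and invokes the inductive hypothesis. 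Your layer-cake identity $n_{(i)}(h) = \sum_{r\geq 1}\indic_{i \leq \varphi_{r^+}(h)}$, combined with the triangle inequality, a Fubini swap, and the one-line evaluation $\sum_i |\indic_{i\leq a} - \indic_{i\leq b}| = |a-b|$, does the whole thing in one pass and makes explicit the duality between the sorted-count sequence and the cumulative-prevalence sequence (they are conjugate in the Young-diagram sense); the paper's induction is in effect this same identity peeled off one layer at a time. Both are elementary, but yours avoids the inductive bookkeeping and is arguably the cleaner presentation. Your proof of the second inequality (unfold $\varphi_{r^+}$, triangle inequality, swap sums, each $s$-term appears $s$ times) is identical to the paper's.
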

\begin{proof}
We prove the first inequality by induction on $r_{\max}(h_1, h_2)$. Suppose $r_{\max}(h_1, h_2) = 1$, then the inequality holds trivially as 
\begin{align*}
\sum_{i} |n_{(i)}(h_1) - n_{(i)}(h_2)| = |\varphi_1(h_1) - \varphi_1(h_2)| = \sum^{r_{\max}(h_1, h_2)}_{r=1} |\varphi_{r^+}(h_1) - \varphi_{r^+}(h_2)|.
\end{align*}
Now suppose it holds for all $r_{\max}(h_1, h_2) < r_0$. For $r_0 \eqdef r_{\max}(h_1, h_2)$. Let $h'_1$ and $h'_2$ be two datasets obtained as follows:
\[
h'_i = \max(n_x, r_0-1): n_x \in h_i\}.
\]
This mapping preserves the ordering of $n_{(i)}$s up to ties and $r_{\max}(h'_1, h'_2) = r_0 - 1$. Thus, 
\begin{equation}
\label{eq:induction}
\ell_1(h'_1, h'_2) \leq \sum^{r_{\max}(h'_1, h'_2)}_{r=1} |\varphi_{r^+}(h'_1) - \varphi_{r^+}(h'_2)| = \sum^{r_{\max}(h_1, h_2) - 1}_{r=1} |\varphi_{r^+}(h_1) - \varphi_{r^+}(h_2)|.
\end{equation}
Hence,
\begin{align*}
& \sum_{i} |n_{(i)}(h_1) - n_{(i)}(h_2)| \\ 
& = \sum_{i : \max(n_{(i)}(h_1), n_{(i)}(h_2)) < r_0} |n_{(i)}(h_1) - n_{(i)}(h_2)|  + \sum_{i : \max(n_{(i)}(h_1), n_{(i)}(h_2)) \geq r_0} |n_{(i)}(h_1) - n_{(i)}(h_2)| \\
& = \sum_{i : \max(n_{(i)}(h_1), n_{(i)}(h_2)) < r_0} |n_{(i)}(h'_1) - n_{(i)}(h'_2)| \\
& + \sum_{i : \max(n_{(i)}(h_1), n_{(i)}(h_2)) \geq r_0} |n_{(i)}(h'_1) - n_{(i)}(h'_2) + \indic_{n_{(i)}(h_1)=  r_0} - \indic_{n_{(i)}(h'_2)= r_0}| \\
& \leq \sum_{i : \max(n_{(i)}(h_1), n_{(i)}(h_2)) < r_0} |n_{(i)}(h'_1) - n_{(i)}(h'_2)| + \sum_{i : \max(n_{(i)}(h_1), n_{(i)}(h_2)) \geq r_0} |n_{(i)}(h'_1) - n_{(i)}(h'_2)| \\
& + \sum_{i : \max(n_{(i)}(h_1), n_{(i)}(h_2)) \geq r_0} | \indic_{n_{(i)}(h_1)= r_0} - \indic_{n_{(i)}(h'_2)=r_0}| \\
& = \ell_1(h'_1, h'_2) + |\varphi_{r_0}(h_1) - \varphi_{r_0}(h_2) |
\end{align*}
Combining the above equation with Equation~\eqref{eq:induction} yields the first inequality.
For the second inequality, observe that
\begin{align*}
\sum^{r_{\max}(h_1,h_2)}_{r \geq 1} \left \lvert\varphi_{r^+}(h_1) - \varphi_{r^+}(h_2) \right \rvert
& = \sum^{r_{\max}(h_1,h_2)}_{r \geq 1} |\sum_{s \geq r}\varphi_{s}(h_1) - \varphi_{s}(h_2)| \\
& \leq \sum^{r_{\max}(h_1,h_2)}_{r \geq 1} \sum_{s \geq r} |\varphi_{s}(h_1) - \varphi_{s}(h_2)| \\
& = \sum_{s \geq 1} s |\varphi_{s}(h_1) - \varphi_{s}(h_2)|,
\end{align*}
where the inequality follows by triangle inequality and the last equality follows by observing that each term corresponding to index $s$ appears exactly $s$ times.
\end{proof}
We now show a simple property of rounding off integers.
\begin{lemma}
\label{lem:round}
Let $x_1,x_2,\ldots, x_n$ be integers. Let $y_1,y_2,\ldots, y_n$ be real numbers. Let $\hat{y}_i$ be the nearest integer to $y_i$. Then,
\[
\sum^n_{i=1} | x_i - \hat{y}_i| \leq 2\sum^n_{i=1} | x_i - {y}_i|.
\]
\end{lemma}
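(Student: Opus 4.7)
The plan is to prove the inequality pointwise in each index $i$ and then sum; no cross-term interaction is needed. For a fixed $i$, the key observation is that $\hat{y}_i$ is the integer closest to $y_i$, and $x_i$ is also an integer. By the defining property of nearest-integer rounding, any other integer, in particular $x_i$, is at least as far from $y_i$ as $\hat{y}_i$ is:
\[
|y_i - \hat{y}_i| \leq |y_i - x_i|.
\]

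Now I would apply the triangle inequality:
\[
|x_i - \hat{y}_i| \leq |x_i - y_i| + |y_i - \hat{y}_i| \leq |x_i - y_i| + |x_i - y_i| = 2|x_i - y_i|.
\]
Summing over $i$ from $1$ to $n$ yields the desired bound.

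There is essentially no obstacle here. The only subtle point is handling ties (e.g.\ $y_i = m + 1/2$), but for any tie-breaking rule that picks some integer $\hat{y}_i$ minimizing $|y_i - \hat{y}_i|$, the inequality $|y_i - \hat{y}_i| \leq |y_i - x_i|$ still holds for every integer $x_i$, so the argument goes through unchanged. The factor of $2$ is tight, as witnessed for instance by $y_i = 1/2 - \delta$, $\hat{y}_i = 0$, and $x_i = 1$.
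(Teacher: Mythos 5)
Your proof is correct and matches the paper's argument exactly: both use the triangle inequality $|x_i - \hat{y}_i| \leq |x_i - y_i| + |y_i - \hat{y}_i|$ together with the nearest-integer property $|y_i - \hat{y}_i| \leq |y_i - x_i|$ (since $x_i$ is an integer), then sum over $i$. The added remarks on tie-breaking and tightness are fine but not needed.
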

\begin{proof}
For any $i$,
\[
| x_i - \hat{y}_i|  \leq | x_i - {y}_i|  + | y_i - \hat{y}_i| \leq 2| x_i - {y}_i|,
\]
where the second inequality follows from the observation that $\hat{y}_i$ is the nearest integer to $y_i$. Summing over all indices $i$ yields the lemma.
\end{proof}
We need the next auxilllary lemma, which we use in the proofs.
\begin{lemma}
\label{lem:addm}
For a histogram $h_1$, let $h'_1$ be the histogram obtained by adding $k$ elements of value $t$ to $h$. Let $h'_2$ be another histogram and let $h_2$ is obtained by removing $k$ elements that are closest to $t$. Then
\[
\ell_1(h_1, h_2) \leq 2\ell_1(h'_1, h'_2).
\]
\end{lemma}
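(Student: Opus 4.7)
The plan is to peel off the $k$ copies of $t$ from the optimal sorted matching between $h'_1$ and $h'_2$ and then apply a triangle inequality.

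By Lemma~\ref{lem:distance}, $\ell_1(h'_1,h'_2)$ is realized by the sorted matching. Since $h'_1 = h_1 + \{t, \ldots, t\}$, the $k$ copies of $t$ sit in some $k$ positions of sorted $h'_1$; I let $y_1, \ldots, y_k$ be the elements of $h'_2$ appearing in the matched positions, and set $\tilde h_2 \eqdef h'_2 - \{y_1, \ldots, y_k\}$. Observing that deleting these $k$ matched pairs from the sorted matching still leaves the sorted matching between $h_1$ and $\tilde h_2$ yields the clean decomposition
\[
\ell_1(h'_1,h'_2) = \sum_{j=1}^{k} |t - y_j| + \ell_1(h_1, \tilde h_2).
\]

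Next, the triangle inequality gives $\ell_1(h_1,h_2) \leq \ell_1(h_1, \tilde h_2) + \ell_1(\tilde h_2, h_2)$, so the task reduces to bounding $\ell_1(\tilde h_2, h_2)$. Both $\tilde h_2$ and $h_2$ arise from $h'_2$ by removing $k$ elements: the matched set $\{y_j\}$ and the set $\{z_1, \ldots, z_k\}$ of $k$ elements of $h'_2$ closest to $t$, respectively. I will bound this by exhibiting an explicit matching: match positions of $h'_2$ common to both $\tilde h_2$ and $h_2$ with themselves, and pair the remaining ``$z$-extras'' in $\tilde h_2$ with the ``$y$-extras'' in $h_2$ in any bijection. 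Routing each such pair through $t$ with the scalar triangle inequality and then extending the sums back to all of $\{y_j\}, \{z_j\}$ (which only drops nonnegative terms for cancelled positions) gives
\[
\ell_1(\tilde h_2, h_2) \leq \sum_{j=1}^{k} |z_j - t| + \sum_{j=1}^{k} |y_j - t|.
\]
Since $z_1, \ldots, z_k$ are the $k$ elements of $h'_2$ closest to $t$, $\sum_j |z_j - t| \leq \sum_j |y_j - t|$, so $\ell_1(\tilde h_2, h_2) \leq 2\sum_j |t - y_j|$. Chaining everything,
\[
\ell_1(h_1, h_2) \leq \ell_1(h_1, \tilde h_2) + 2\sum_j |t - y_j| = \ell_1(h'_1, h'_2) + \sum_j |t - y_j| \leq 2\,\ell_1(h'_1, h'_2),
\]
using that $\sum_j |t - y_j|$ is one of the nonnegative summands in the sorted-matching expression for $\ell_1(h'_1, h'_2)$.

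The main obstacle, in my view, is making the matching used to bound $\ell_1(\tilde h_2, h_2)$ rigorous: when $\{y_j\}$ and $\{z_j\}$ share elements as multisets, the argument must proceed at the level of positions in sorted $h'_2$ rather than of values, so that the ``common'' part really is common and the ``extras'' on the two sides are disjoint and of equal cardinality. Once that multiset bookkeeping is pinned down, together with the observation that any matching upper bounds $\ell_1$ on multisets, the rest of the argument is a direct computation.
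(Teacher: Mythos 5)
Your proof is correct, and it takes a genuinely different route from the paper. The paper introduces $h''_2 = h_2 + \{t,\ldots,t\}$ (the histogram obtained by adding $k$ copies of $t$ back to $h_2$), observes that $\ell_1(h_1,h_2) \leq \ell_1(h'_1,h''_2)$, applies the triangle inequality through $h'_2$, and finishes by asserting that among all histograms with $\varphi_t \geq k$, $h''_2$ is the one closest to $h'_2$ (so $\ell_1(h'_2,h''_2) \leq \ell_1(h'_2,h'_1)$). You instead go downward: you peel the $k$ matched pairs $(t,y_j)$ off the sorted matching of $h'_1$ and $h'_2$ to get the identity $\ell_1(h'_1,h'_2)=\sum_j|t-y_j|+\ell_1(h_1,\tilde h_2)$ with $\tilde h_2 = h'_2\setminus\{y_j\}$, triangle through $\tilde h_2$, and bound $\ell_1(\tilde h_2,h_2)$ by an explicit matching routed through $t$ and the optimality of the closest-$k$ choice. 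Both arguments hinge on the same two ingredients (triangle inequality plus minimality of the closest-$k$ elements), but your intermediate histogram is a thinned $h'_2$ rather than a fattened $h_2$, and your sorted-matching decomposition makes the cancellation fully explicit. The paper's version is shorter but leaves its key optimality claim at ``it can be shown''; yours is longer but supplies the matching-level detail, which is exactly the ``multiset bookkeeping'' you flag as the potential obstacle — and you handle it correctly by working with positions in sorted $h'_2$ so the common part and the two disjoint extras have equal cardinality.
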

\begin{proof}
Let $h''_2$ be the histogram obtained by adding $k$ elements of value $t$ to $h_2$. Since adding same number of elements to two datasets do not decrease the $\ell_1$ distance,
\begin{align*}
  \ell_1(h_1, h_2)= \ell_1(h'_1, h''_2) \leq \ell_1(h'_1, h'_2) + \ell_1(h'_2, h''_2),
\end{align*}
where the second inequality follows by triangle inequality. Consider the set of all histograms that have $\varphi_{t} \geq k$. Both $h''_2$ and $h'_1$ belong to this set. 
It can be shown that of all histograms in that set $h''_2$ is closest to $h'_2$ and hence
\[
 \ell_1(h'_2, h''_2) \leq \ell_1(h'_2, h'_1),
\]
and hence the lemma.
\end{proof}

\section{Privacy analysis of \textsc{PrivHist}}
\label{app:dpanalysis}

\subsection{Overview of privacy analysis}
\label{app:privacy_overview}

We break the analysis of \textsc{PrivHist} step by step. We will show
that release of $N$ (step $(1)$) is $\epsilon_1$-DP. Then, we show that $H^{cs}$ and $H^{c\ell}$ are $\epsilon_2$-DP. Observe that \textsc{PrivHist-LowPrivacy} is
just a post-processing step and by the post processing lemma does not
need any differentialy privacy analysis. Finally we show that
\textsc{PrivHist-HighPrivacy} is $\epsilon_3$ differentially
private. By the composition theorem~\cite{dwork2014algorithmic}, it
follows that the total privacy cost is $\epsilon_1 + \epsilon_2 +
\epsilon_3= \epsilon$ and hence the privacy cost in
Theorem~\ref{thm:main}. Of the above steps, proving $N$ is $\epsilon_1$-DP is straightforward and a sketch is in Lemma~\ref{lem:N}.
Proving $H^{cs}$ and $H^{c\ell}$ is $\epsilon_2$-DP is more involved and is in Lemma~\ref{lem:c_privacy}. The main intuition behind Lemma~\ref{lem:c_privacy} is a sensitivity preserving histogram split, which we describe below.

\subsection{Sensitivity preserving histogram split}
\label{app:split}
Any two neighboring datasets $h_1$ and $h_2$ can fall into one of three categories:
\begin{enumerate}
\item They differ in $\varphi_{T}$ and $\varphi_{T+1}$.
\item They differ in $\varphi_r$ and $\varphi_{r+1}$ for some $0 \leq r < T - 1$.
\item They differ in $\varphi_r$ and $\varphi_{r+1}$ for some $r > T$.
\end{enumerate}
For cases $2$ and $3$ above, it suffices to add noise to cumulative prevalences and counts as in \textbf{(3)} and \textbf{(4)}. However, if they differ in $\varphi_{T}$ and $\varphi_{T+1}$, the analysis is more involved. For example, consider the following simple example. $h_1 = \{\varphi_{T} = 1, \varphi_{n -T} = 1\}$ and $h_2 = \{\varphi_{T+1} = 1, \varphi_{n -T-1} = 1\}$. $h_1$ and $h_2$ have $\ell_1$ distance of one and are neighbors. If we divide $h_1$ in to two parts based on threshold $T$, say $h^s_1 = \{\varphi_{T} = 1\}$ and $h^\ell_1 = \{\varphi_{n -T} = 1\}$ and $h^s_2 = \{\}$ and $h^\ell_2 =  \{\varphi_{T+1} = 1, \varphi_{n -T-1} = 1\}$, then $\ell_1(h^\ell_1, h^\ell_2) = T + 2$. Thus, if we naively add noise to cumulative prevalences for $r \leq T$ and to counts $r > T$, then we need to add noise $L(\cO(T/\epsilon))$, which makes the utility of the algorithm much worse. To overcome this, we preprocess $h$ by moving $Z^b$ counts from $\varphi_T$ to $\varphi_{T+1}$, where $Z_b$ is a geometric random variable. This provides the required privacy without increasing the utility considerably. Finally, moving mass $Z^b$ can make the histogram to have negative prevalences. To overcome this, we add $M$ fake counts to $\varphi_T$ and $\varphi_{T+1}$.

\subsection{Technical details}

We first prove a dataset depending composition theorem that helps us decompose differential privacy 
analysis depending on the dataset.
\begin{theorem}[Dataset dependent composition theorem]
\label{lem:composition}
Let $Z_1, Z_2,\ldots, Z_n $ be a set of independent random variables. Let $X_1 = f_1(x, Z_1)$ be a deterministic function. Similarly let $X_{i} = f_i(X_{i-1}, Z_i)$ be deterministic functions for $2 \leq i \leq n$. If for any two neighboring data sets $x$ and $x'$, 
\begin{equation}
    \label{eq:composition2}
\min_{i \geq 1} \max_{z_1, z_2, \ldots, z_{i-1}, x_i} \frac{\Pr(X_i = x_i | x, z_1, z_2, \ldots, z_{i-1})}{\Pr(X_i = x_i | x', z_1, z_2, \ldots, z_{i-1})} \leq e^{\epsilon}, \footnote{For notational simplicity, let $z^0_1 = \emptyset$.}
\end{equation}
then $X_n$ is an $\epsilon$-DP output.
\end{theorem}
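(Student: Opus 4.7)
The plan is to exploit the Markov structure of the recursion $X_i = f_i(X_{i-1}, Z_i)$: since every $X_j$ for $j > i$ is built by applying deterministic maps to $X_i$ using only the independent noise variables $Z_{i+1}, \ldots, Z_n$ (which carry no information about the dataset), the map $X_i \mapsto X_n$ is a genuine post-processing step. So if the intermediate output $X_i$ is $\epsilon$-DP in a suitable conditional sense, the final output $X_n$ inherits the same privacy guarantee by the standard post-processing lemma of \cite{dwork2014algorithmic}.

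Concretely, I would first pick an index $i^{*}$ achieving the minimum in the hypothesis, so that for every conditioning tuple $z_1, \ldots, z_{i^{*}-1}$ and every candidate value $x_{i^{*}}$ the pointwise ratio
\[
\frac{\Pr(X_{i^{*}} = x_{i^{*}} \mid x, z_1, \ldots, z_{i^{*}-1})}{\Pr(X_{i^{*}} = x_{i^{*}} \mid x', z_1, \ldots, z_{i^{*}-1})} \leq e^{\epsilon}.
\]
Then I would write out $\Pr(X_n = y \mid x, z_1, \ldots, z_{i^{*}-1})$ as a mixture over possible values of $X_{i^{*}}$ with weights given by $\Pr(X_{i^{*}} = x_{i^{*}} \mid x, z_1, \ldots, z_{i^{*}-1})$ and mixing kernel $\Pr(X_n = y \mid X_{i^{*}} = x_{i^{*}})$. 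The mixing kernel is the same for $x$ and $x'$ because it is determined entirely by $f_{i^{*}+1}, \ldots, f_n$ and the distribution of the independent block $(Z_{i^{*}+1}, \ldots, Z_n)$; hence the pointwise ratio bound on the mixing weights transfers directly to the mixture, giving $\Pr(X_n = y \mid x, z_1, \ldots, z_{i^{*}-1}) \leq e^{\epsilon}\Pr(X_n = y \mid x', z_1, \ldots, z_{i^{*}-1})$.

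The final step is to average out the conditioning variables $z_1, \ldots, z_{i^{*}-1}$. Because $Z_1, \ldots, Z_{i^{*}-1}$ are independent random variables whose joint distribution has nothing to do with the dataset, the averaging weights are identical under $x$ and $x'$, so the pointwise inequality survives integration and yields $\Pr(X_n = y \mid x) \leq e^{\epsilon}\Pr(X_n = y \mid x')$ for every $y$; summing/integrating over a measurable set $S$ gives the definition of $\epsilon$-DP in \eqref{eq:diff_privacy}.

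The only real subtlety, which I expect to spell out carefully, is justifying the two independence claims used above: that the post-$i^{*}$ trajectory $X_n$ conditional on $X_{i^{*}}$ is distributionally independent of both $(Z_1, \ldots, Z_{i^{*}-1})$ and of the dataset, and that the marginal distribution of $(Z_1, \ldots, Z_{i^{*}-1})$ is likewise dataset-independent. Both follow immediately from the assumption that the $Z_j$ are independent and drawn from a dataset-free distribution, together with the fact that each $f_j$ is deterministic, so this is really a bookkeeping step rather than a new idea.
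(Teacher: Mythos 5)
Your proposal is correct and follows essentially the same route as the paper's own proof: both arguments exploit that (i) $X_n$ is a dataset-independent post-processing of $X_{i^*}$ because, conditional on $X_{i^*}$, the tail $X_n$ is determined only by $f_{i^*+1},\ldots,f_n$ and the independent noise block $(Z_{i^*+1},\ldots,Z_n)$, and (ii) the marginal of $(Z_1,\ldots,Z_{i^*-1})$ is dataset-free, so a pointwise ratio bound survives averaging. The only cosmetic difference is the order of the two reductions — the paper first bounds $\Pr(X_n=x_n\mid x)/\Pr(X_n=x_n\mid x')$ by the worst $x_i$-ratio and then separately bounds that by the worst $(z_1,\ldots,z_{i-1})$-conditional ratio, whereas you condition on $z_1,\ldots,z_{i^*-1}$ up front, run the post-processing argument under that conditioning, and integrate the $z$'s out at the end — but both steps reduce to the same "ratio of mixtures with shared nonnegative weights is at most the maximum pointwise ratio" observation, so the content is identical.
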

\begin{proof}
For any two datasets, since $x \to X_1 \to X_n$ is a Markov chain,
\[
\Pr(X_n = x_n | x)  = \sum_{x_i} \Pr(X_n = x_n | X_i = x_i) \cdot \Pr(X_i = x_i | x).
\]
Hence for any two datasets $x, x'$ any $x_n$, and for all $i$,
\begin{align*}
    \frac{\Pr(X_n = x_n | x)}{\Pr(X_n = x_n | x')}
    = \frac{\sum_{x_1} \Pr(X_n = x_n | X_i = x_i) \cdot \Pr(X_i = x_i | x)}{\sum_{x_i} \Pr(X_n = x_n | X_i = x_i) \cdot \Pr(X_i = x_i | x')} \leq \max_{x_i} \frac{ \Pr(X_i = x_i | x')}{\Pr(X_i = x_i | x)}.
\end{align*}
Similarly for any $i$,
\[
\Pr(X_i = x_i | x)  = \sum_{z_1, z_2,\ldots, z_{i-1}} \Pr(X_i= x_i | x, z_1, z_2, \ldots, z_{i-1}) \cdot \Pr(z_1, z_2, \ldots, z_{i-1}).
\]
Hence for any two datasets $x, x'$,
\begin{align*}
    \frac{\Pr(X_i = x_i | x) }{\Pr(X_i = x_i | x') }
& = \frac{\sum_{z_1, z_2,\ldots, z_{i-1}} \Pr(X_i= x_i | x, z_1, z_2, \ldots, z_{i-1}) \cdot \Pr(z_1, z_2, \ldots, z_{i-1})}{\sum_{z_1, z_2,\ldots, z_{i-1}} \Pr(X_i= x_i | x', z_1, z_2, \ldots, z_{i-1}) \cdot \Pr(z_1, z_2, \ldots, z_{i-1})} \\
& \leq 
\max_{z_1, z_2, \ldots, z_{i-1}} \frac{\Pr(X_i= x_i | x, z_1, z_2, \ldots, z_{i-1})}{ \Pr(X_i= x_i | x', z_1, z_2, \ldots, z_{i-1})}.
\end{align*}
Hence,
\begin{align*}
\max_{x_n} \frac{\Pr(X_n = x_n | x)}{\Pr(X_n = x_n | x')}
&\leq \min_{i \geq 1} \max_{x_i} \frac{ \Pr(X_i = x_i | x')}{\Pr(X_i = x_i | x)} \\
& \leq \min_{i \geq 1}  \max_{x_i} \max_{z_1, z_2, \ldots, z_{i-1}} \frac{\Pr(X_i= x_i | x, z_1, z_2, \ldots, z_{i-1})}{ \Pr(X_i= x_i | x', z_1, z_2, \ldots, z_{i-1})},
\end{align*}
and hence for every pair of datasets if the right hand side is smaller than $e^{\epsilon}$, then $X_n$ is diffferentially private.
\end{proof}
\subsection{Privacy analysis}
We start with proving $N$ is $\epsilon_1$-DP.
\begin{lemma}
\label{lem:N}
$N$ is $\epsilon_1$-DP.
\end{lemma}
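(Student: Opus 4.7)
The plan is straightforward: I would reduce the claim to the standard guarantee of the geometric mechanism (Definition~\ref{def:geometric}) together with the post-processing property of differential privacy. The only real content is to verify that the query $n(h) \eqdef \sum_{n_x \in h} n_x$ has sensitivity $1$ under the neighboring relation of Definition~\ref{def:neighbor}.

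First I would observe that for any two neighboring histograms $h, h'$ we have $\ell_1(h, h') = 1$, and by Lemma~\ref{lem:distance} this means $\sum_{i \geq 1} |n_{(i)}(h) - n_{(i)}(h')| = 1$. Then by the triangle inequality,
\[
|n(h) - n(h')| = \Bigl| \sum_{i \geq 1} \bigl(n_{(i)}(h) - n_{(i)}(h')\bigr) \Bigr| \leq \sum_{i \geq 1} |n_{(i)}(h) - n_{(i)}(h')| = 1,
\]
so $n(\cdot)$ is an integer-valued query of sensitivity exactly $1$.

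Second, since $Z^a \sim G(e^{-\epsilon_1})$, Definition~\ref{def:geometric} applied with $\Delta = 1$ tells us that the intermediate output $n(h) + Z^a$ is $\epsilon_1$-DP. Explicitly, for any integer $k$ and neighbors $h, h'$,
\[
\frac{\Pr(n(h) + Z^a = k)}{\Pr(n(h') + Z^a = k)} = \frac{\Pr(Z^a = k - n(h))}{\Pr(Z^a = k - n(h'))} = e^{-\epsilon_1(|k - n(h)| - |k - n(h')|)} \leq e^{\epsilon_1|n(h) - n(h')|} \leq e^{\epsilon_1}.
\]

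Third, $N = \max(n(h) + Z^a, 0)$ is obtained by applying the deterministic post-processing map $x \mapsto \max(x,0)$ to the $\epsilon_1$-DP quantity $n(h) + Z^a$. By the post-processing lemma of~\cite{dwork2014algorithmic}, $N$ is therefore $\epsilon_1$-DP. There is no genuine obstacle here; the only thing to be careful about is invoking Lemma~\ref{lem:distance} to confirm that the $\ell_1$-sensitivity of the total count coincides with the neighbor definition based on sorted counts.
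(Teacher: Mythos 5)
Your proof is correct and matches the paper's proof sketch: both establish that $n(\cdot)$ has sensitivity $1$ via the sorted-count characterization of $\ell_1$ (Lemma~\ref{lem:distance}), then invoke the geometric mechanism's guarantee. You are slightly more explicit in separating out the final $\max(\cdot,0)$ as a post-processing step, but this is the same argument.
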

\begin{proof}[Proof sketch]
 The proof follows from Definition~\ref{def:geometric} and the fact
 that for any two neighboring datasets $h_1, h_2$, $n(h_1) - n(h_2) = | \sum_{n_x \in
   h_1} n_x - \sum_{n_x \in h_2} n_x| \leq \ell_1(h_1, h_2) = 1$, and
 hence the sensitivity of this query is $1$.
\end{proof}
We now show that the release of $H^{cs}$ and $H^{c\ell}$ is DP.
\begin{lemma}
\label{lem:c_privacy}
Release of $H^{cs}$ and $H^{c\ell}$ is $\epsilon_2$-DP.
\end{lemma}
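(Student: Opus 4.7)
The plan is to apply Theorem~\ref{lem:composition} with a two-stage decomposition of the algorithm. Take $X_1 = H^b$, produced from $h$ via the single geometric noise $Z^b$ of step (2), and $X_2 = (H^{cs}, H^{c\ell})$, produced from $X_1$ via the independent noise families $\{Z^{cs}_r\}$ and $\{Z^{c\ell}_i\}$ of steps (3) and (4). For any neighbors $h_1, h_2$ with $\ell_1(h_1, h_2) = 1$, Lemma~\ref{lem:distance} forces them to differ by $\pm 1$ at two adjacent prevalences, so (as in Appendix~\ref{app:split}) there are three disjoint cases: (i) the difference sits at $\varphi_r, \varphi_{r+1}$ for some $r < T - 1$, (ii) it sits at $\varphi_r, \varphi_{r+1}$ for some $r > T$, or (iii) it sits at $\varphi_T, \varphi_{T+1}$. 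In each case I will choose an appropriate index $i \in \{1,2\}$ and verify that the conditional ratio in Theorem~\ref{lem:composition} is at most $e^{\epsilon_2}$.

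For cases (i) and (ii), take $i = 2$ and condition on $Z^b = z^b$; then $H^b$ and the derived $H^{bs}, H^{b\ell}$ become deterministic functions of $h$. In case (i) both altered coordinates lie in the small regime, so $H^{b\ell}$ is unchanged and $H^{c\ell}$ has identical distribution under $h_1, h_2$. The cumulative prevalences $\varphi^{bs}_{s^+}$ agree for $s \le r$ (the $+1$ at $r$ and $-1$ at $r+1$ both lie inside the sum and cancel) and for $s > r + 1$ (neither altered coordinate is in the sum), and they differ by exactly one at $s = r + 1$. Since step (3) adds iid $G(e^{-\epsilon_2})$ noise coordinate-wise, the product of per-coordinate ratios equals one for $s \neq r + 1$ and is at most $e^{\epsilon_2}$ at $s = r + 1$ by the standard geometric-mechanism calculation. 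Case (ii) is symmetric: $H^{bs}$ is unchanged, exactly one sorted count in $H^{b\ell}$ differs by one, and the noise in step (4) again produces an $e^{\epsilon_2}$ ratio.

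Case (iii) is the main obstacle and is the whole reason for introducing $Z^b$ and the $M$ fake counts; a naive split at threshold $T$ would create a sensitivity of order $T$, as illustrated by the example in Appendix~\ref{app:split}. Here I will take $i = 1$ and bound the marginal law of $H^b$ directly. Only the pair $(\varphi^b_T, \varphi^b_{T+1})$ can differ, and the key invariant is that $\varphi^a_T + \varphi^a_{T+1}$ is identical under $h_1$ and $h_2$ (both entries receive the same shift by $M$ in step (2a), and the perturbation between $h_1$ and $h_2$ only transfers one unit of mass between $T$ and $T+1$, preserving $\varphi_T + \varphi_{T+1}$). Consequently $(\varphi^b_T, \varphi^b_{T+1})$ is supported on the same affine line $\{(v, c - v) : v \in \mathbb{Z}\}$ under both datasets, and the unit shift $\varphi^a_T(h_1) - \varphi^a_T(h_2) = \pm 1$ is absorbed by $Z^b$, yielding
\[
\frac{\Pr(H^b = x_1 \mid h_1)}{\Pr(H^b = x_1 \mid h_2)} \;=\; \frac{\Pr(Z^b = k)}{\Pr(Z^b = k \mp 1)} \;\le\; e^{\epsilon_2}
\]
since all other coordinates of $H^b$ agree. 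Because $(H^{cs}, H^{c\ell})$ is obtained by adding independent noise to a deterministic function of $H^b$, the post-processing lemma transfers this bound to the released pair, and combining the three cases via Theorem~\ref{lem:composition} completes the proof.
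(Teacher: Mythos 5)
Your proof is correct and follows essentially the same route as the paper: the dataset-dependent composition theorem (Theorem~\ref{lem:composition}), the identical three-case split on where the neighbors differ, the observation for case (iii) that $\varphi^a_T + \varphi^a_{T+1}$ is invariant so the unit shift is absorbed by $Z^b$, and conditioning on $Z^b$ in cases (i)--(ii) so the remaining noise mechanisms face sensitivity one. The only cosmetic difference is that you compress the chain to two stages $h \to H^b \to (H^{cs}, H^{c\ell})$ while the paper spells out a four-stage chain, and your claim in case (i) that the difference sits exactly at $s = r+1$ is slightly stronger than needed (after the $\max(0,\cdot)$ clamps in step 2c the differing coordinate can shift), but the bound $\sum_r |\varphi^{bs}_{r^+}(H^{bs}_1) - \varphi^{bs}_{r^+}(H^{bs}_2)| \leq 1$ is what actually drives the $e^{\epsilon_2}$ ratio and it holds regardless.
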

\begin{proof}
$h \to H^{b} \to (H^{bs}, H^{b\ell}) \to (H^{cs}, H^{b\ell}) \to (H^{cs}, H^{c\ell})$ is a Markov chain. We use Theorem~\ref{lem:composition} to show that the output of this Markov chain is DP for all datasets.
For any two neighboring datasets $h_1$ and $h_2$ can fall into one of three categories:
\begin{enumerate}
\item They differ in $\varphi_{T}$ and $\varphi_{T+1}$.
\item They differ in $\varphi_r$ and $\varphi_{r+1}$ for some $0 \leq r < T - 1$.
\item They differ in $\varphi_r$ and $\varphi_{r+1}$ for some $r > T$.
\end{enumerate}
We prove that $ (H^{cs}, H^{c\ell})$ release is $\epsilon_2$-DP  for each of the above three cases.

\noindent \textbf{Case 1}: We show that the process $h \to H^{b}$
satisfies \eqref{eq:composition2}. Observe that
\begin{align*}
\max_{h^b} \frac{\Pr(H^b = h^b|h_2) }{\Pr(H^b = b^b|h_1) } 
& \stackrel{(a)}{=} 
\frac{\Pr(\bar{\varphi}(H^b) = \bar\varphi^b | \bar{\varphi}(h_2))}{\Pr(\bar{\varphi}(H^b) = \bar\varphi^b | \bar{\varphi}(h_1))} \\
& \stackrel{(b)}{=}
 \frac{\Pr({\varphi}_T(H^b) = \varphi^b_T , \varphi_{T+1}(H^b) = \varphi^b_{T+1} | {\varphi}_T(h_2))}{\Pr({\varphi}_T(H^b) = \varphi^b_T , \varphi_{T+1}(H^b) = \varphi^b_{T+1}  | \bar{\varphi}_T(h_1))} \\
& \stackrel{(c)}{=} \frac{\Pr(Z^b = \varphi^b_T - \varphi_T(h_2))}{\Pr(Z^b = \varphi^b_T - \varphi_T(h_1))} \\
& =
 \frac{e^{-\epsilon_2|\varphi^b_T - \varphi_T(h_2)|}}{e^{-\epsilon_2|\varphi^b_T - \varphi_T(h_1)|}} \\
& \leq e^{\epsilon_2 |\varphi_T(h_2) - \varphi_T(h_1)|} \\
& \leq e^{\epsilon_2}.
\end{align*}
$(a)$ follows by observing that there is a one to one correspondence
between the histogram and the prevalences, $(b)$ follows from the fact
that noise is added only to $\varphi_T$ and $\varphi_{T+1}$, and $(c)$
follows from the fact that the noise added to $\varphi_{T+1}$ is a deterministic function of noise added to
$\varphi_{T}$, and $\varphi^b_{T} + \varphi^b_{T+1} = \varphi^a_{T} +
\varphi^a_{T+1}$ always.

\noindent \textbf{Case 2}: We show that $(H^{bs}, H^{b\ell})
\to (H^{cs}, H^{b\ell})$ satisfies \eqref{eq:composition2}. Let
$H^{bs}_1$ and $H^{bs}_2$ be the values of $H^{bs}$ for inputs $h_1$
and $h_2$ respectively. For any $Z_b = z_b$, since difference of maximums is at most maximum of differences,
\begin{align*}
\sum_{r} |\varphi_{r^+}(H^{bs}_2) - \varphi_{r^+}(H^{bs}_1)| 
& \leq 
\sum_{r} |\varphi_{r^+}(h_2) - z_b- \varphi_{r^+}(h_1) + z_b| \\
& =
\sum_{r} |\varphi_{r^+}(h_2) - \varphi_{r^+}(h_1)| \leq 1.
\end{align*}
Hence,
\begin{align*}
\frac{\Pr((H^{cs}, H^{b\ell}) = (h^{cs}, h^{b\ell}) | H^{bs}_2,
  H^{b\ell}) }{\Pr(H^{cs}, H^{b\ell}) = (h^{cs}, h^{b\ell}) |
  H^{bs}_1, H^{b\ell})} & = \frac{\Pr(H^{cs} = h^{cs}| H^{bs}_2)
}{\Pr(H^{cs} = h^{cs} | H^{bs}_1)} \\ 
& = \frac { \Pr(
  \bar\varphi_{r^+}(H^{cs}) = \bar\varphi_{r^+}(h^{cs})| \bar\varphi_{r^+}(H^{bs}_2))}{ \Pr( \bar\varphi_{r^+}(H^{cs}) =
  \bar\varphi_{r^+}(h^{cs})| \bar\varphi_{r^+}(H^{bs}_1) )}
  \\ & =
\prod_{r\geq 1}\frac{\Pr(\varphi_{r^+}(H^{cs}) =
  \varphi_{r^+}(h^{cs})|\bar\varphi_{r^+}(H^{bs}_2))}{
  \Pr(\varphi_{r^+}(H^{cs}) = \varphi_{r^+}(h^{cs})|
  \bar\varphi_{r^+}(H^{bs}_1))} \\ & = \prod_{r\geq 1}
\frac{e^{-\epsilon_2|\varphi_{r^+}(h^{cs}) - \varphi_{r^+}(H^{bs}_2)|
}}{e^{-\epsilon_2|\varphi_{r^+}(h^{cs}) - \varphi_{r^+}(H^{bs}_1)| }}\\ &
\leq \prod_{r\geq 1} e^{\epsilon_2|\varphi_{r^+}(H^{bs}_2) -
  \varphi_{r^+}(H^{bs}_1)|} \leq e^{\epsilon_2}.
\end{align*}
The first equality follows from the observation that there is a one to
one correspondence between $\varphi_r$s and $\varphi_{r^+}$s. The
second equation follows from the fact that noise added to various
\ $\varphi_{r^+}$s are independent of each other. The rest of the
proof follows from the definition of the geometric mechanism and the fact
that $h_1$ and $h_2$ are neighbors.

\noindent{\textbf{Case 3}}: We show that $(H^{cs}, H^{b\ell}) \to
(H^{cs}, H^{c\ell})$ satisfies \eqref{eq:composition2}. Let
$H^{b\ell}_1$ and $H^{b\ell}_2$ are the $H^{b\ell}$'s corresponding to
$h_1$ and $h_2$ respectively.  Conditioned on the value of $Z^b$, for
any two neighboring histograms that differ in two consecutive $r$'s
that are larger than $T$,
\[
\sum_{r > T} \varphi_r(H^{b\ell}_2) - \sum_{r > T} \varphi_r(H^{b\ell}_2) =
\sum_{r > T} \varphi_r(h_2) - \varphi_r(h_1) = 0.
\]
Since their sums are equal, conditioned on the value of $Z^b$, it can
be shown that $H^{b\ell}_1, H^{b\ell}_2$ are proper histograms and
both contain $\max(0, M + Z^b + \sum_{r > T} \varphi_r(h_1))$ elements
and they differ in at most two consecutive values of $r$. Thus
$H^{b\ell}_1$ and $H^{b\ell}_2$ contain same number of counts and
differ in at most one count denoted by $i^*$.  With these observations
we now bound the ratio of probabilities for differential privacy.

Since there is a one to one correspondence between sorted counts and
the histograms, we get
\begin{align*}
\frac{\Pr((H^{cs}, H^{c\ell}) = (h^{cs}, h^{c\ell}) | H^{b\ell}_2,
  H^{cs}) }{\Pr(H^{cs}, H^{c\ell}) = (h^{cs}, h^{c\ell}) |
  H^{b\ell}_1, H^{cs})} & = \frac{\Pr(H^{c\ell} = h^{c\ell}| H^{b\ell}_2)
}{\Pr(H^{c\ell} = h^{c\ell} | H^{b\ell}_1)} \\ 
&= \frac{\prod_{i} \Pr(N_i = n_i | n_{(i)}(H^{b\ell}_2))}{\prod_{i}
  \Pr(N_i = n_i | n_{(i)}(H^{b\ell}_2))} \\
&= \frac{\Pr(N_{i^*} =
  n_{i^*}| n_{({i^*})}(H^{b\ell}_2))}{\Pr(N_{i^*} = n_{i^*}|
  n_{({i^*})}(H^{b\ell}_1))} \\ & = \frac{e^{-\epsilon_2|n_{i^*} -
    n_{({i^*})}(H^{b\ell}_2)|}}{e^{-\epsilon_2|n_{i^*} -
    n_{({i^*})}(H^{b\ell}_1)|}} \leq
e^{\epsilon_2|n_{({i^*})}(H^{b\ell}_2) - n_{({i^*})}(H^{b\ell}_1)|}
\leq e^{\epsilon_2},
\end{align*}
where the last set of inequalities follow from the definition of
geometric mechanism and the fact that the noise added to $N_{(i)}$s
are independent of each other, and $n_{(i^*)}$ is the only count in
which the two histograms differ.
\end{proof}

We now show that \textsc{PrivHist-HigPrivacy} is $\epsilon_3$-DP. 
\begin{lemma}
\label{lem:highprivacy_dp}
The output of \textsc{PrivHist-HigPrivacy} is $\epsilon_3$-DP. 
\end{lemma}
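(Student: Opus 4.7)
The plan is to track how the private histogram $h$ propagates through \textsc{PrivHist-HighPrivacy} and to show that all of this propagation is masked by the Laplace noise added in H4. First, steps H5--H6 (isotonic regression and rounding) are deterministic functions of the noisy cumulative prevalences produced in H4, so they preserve differential privacy by the post-processing lemma. Second, the boundary set $S$ constructed in H2 is a function only of $T, T', q, H^{\ell}$, and $N$, which are treated as auxiliary (non-private) inputs to \textsc{PrivHist-HighPrivacy}; hence $S$ may be regarded as fixed in the present analysis, and its use does not consume any of the $\epsilon_3$ budget. Third, step H1 (aggregating counts above $2N$ into $\varphi^u_{2N}$) and step H3 (linear smoothing onto $S$) are deterministic maps from $h$ through $\varphi^u$ to $\varphi^v$. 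Consequently the only $h$-dependent randomness released by the mechanism is the independent Laplace noise added to $\{\varphi^v_{s_i+}\}_{s_i \in S}$ in H4.

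For independent Laplace noise with scale $1/(\epsilon_3 (s_i - s_{i-1}))$ at coordinate $s_i$, the log-density ratio at any output $y$ and any two inputs $h_1, h_2$ is bounded by
\[
\log \frac{\Pr(y \mid h_1)}{\Pr(y \mid h_2)} \;\leq\; \sum_{s_i \in S} \epsilon_3 (s_i - s_{i-1}) \,\bigl|\varphi^v_{s_i+}(h_1) - \varphi^v_{s_i+}(h_2)\bigr|.
\]
Thus the task reduces to proving the weighted-sensitivity bound
\[
\sum_{s_i \in S} (s_i - s_{i-1}) \,\bigl|\varphi^v_{s_i+}(h_1) - \varphi^v_{s_i+}(h_2)\bigr| \;\leq\; 1
\]
for every pair of neighboring histograms $h_1, h_2$.

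To prove this bound, I would reduce to the case where $h_1, h_2$ differ by a unit move of one count from some position $r$ to $r+1$ with $r \in [1, 2N]$ (an addition or removal of a count-$1$ element is handled identically since $1 \in S$, while a move with both endpoints strictly above $2N$ leaves $\varphi^u$ unchanged by H1). Because H3 splits a unit of mass at position $j \in [s_k, s_{k+1}]$ into $(s_{k+1} - j)/(s_{k+1} - s_k)$ at $s_k$ and $(j - s_k)/(s_{k+1} - s_k)$ at $s_{k+1}$, a direct calculation in each subcase---both $r$ and $r+1$ strictly inside $(s_k, s_{k+1})$; $r = s_k$ a boundary; $r+1 = s_{k+1}$ a boundary; or $r, r+1$ two adjacent boundaries of $S$---shows that exactly $\varphi^v_{s_k}$ decreases and $\varphi^v_{s_{k+1}}$ increases, each by $1/(s_{k+1} - s_k)$, with all other smoothed prevalences unchanged. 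Taking cumulative tail sums, these two changes cancel in every $\varphi^v_{s_j+}$ with $s_j \leq s_k$ and are absent from every $\varphi^v_{s_j+}$ with $s_j > s_{k+1}$, so only $\varphi^v_{s_{k+1}+}$ changes, by exactly $1/(s_{k+1} - s_k)$. Multiplying by the weight $s_{k+1} - s_k$ yields the target bound of $1$.

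The main obstacle is the bookkeeping in the boundary subcases, where a unit mass sitting at a boundary point of $S$ contributes with full weight to a single smoothed prevalence instead of being split. I would verify in each such case that the ``full-weight'' contribution at the boundary endpoint cancels exactly against the corresponding linear contribution from the interior position on the other side of the swap, reproducing the same $\mp 1/(s_{k+1} - s_k)$ pattern as in the fully interior case. Once this cancellation is checked across subcases, combining the weighted-sensitivity bound with the Laplace calculation above gives $\epsilon_3$-DP for the output of H4, and the post-processing lemma upgrades this to $\epsilon_3$-DP for the final output of \textsc{PrivHist-HighPrivacy}.
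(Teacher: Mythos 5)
Your proposal is correct and follows essentially the same route as the paper's proof: both reduce to showing that after the smoothing step H3, a single unit move between neighboring histograms changes only $\varphi^v_{s_k}$ and $\varphi^v_{s_{k+1}}$ by $\mp 1/(s_{k+1}-s_k)$, hence only $\varphi^v_{s_{k+1}+}$ shifts, and the Laplace scale $1/(\epsilon_3(s_{k+1}-s_k))$ at that coordinate then yields the $e^{\epsilon_3}$ bound. You are somewhat more explicit about the boundary subcases and about why $S$ can be conditioned on, but the decomposition and the key sensitivity calculation are the same as the paper's.
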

\begin{proof}
Observe that $h \to H^v \to H^w \to H^x \to H^y$ is a Markov
chain, hence it suffices to prove $H^v \to H^w$ is $\epsilon_3$-DP.

Let $h_1$ and $h_2$ are two neighboring datasets. Without loss of
generality, let they differ in $\varphi_k$ and $\varphi_{k+1}$. Let
$h^v_1$ and $h^v_2$ be the two histograms obtained after quanitzation
step $(3)$. Since $h_1$ and $h_2$ differ only at $k$ and $k+1$,
$h^v_1$ and $h^v_2$ differ only in $\varphi^v_{s_{i-1}}$ and
$\varphi^v_{s_{i}}$ for some $i$. Furthermore,
\[
| \varphi^v_{s_{i-1}}(h^v_1) - \varphi^v_{s_{i-1}}(h^v_2) | = | \varphi^v_{s_i}(h^v_1) - \varphi^v_{s_i}(h^v_2) | \leq \frac{1}{s_{i} - s_{i-1}}.
\]
For all $j \notin \{s_{i-1}, s_{i}\}$, $\varphi^v_j(h^v_1) =\varphi^v_j(h^v_2)$.  Hence
$\varphi^v_{s_{i}+}(h^v_1)  \neq \varphi^v_{s_{i}+}(h^v_2)$ for only one value of $i$, let $i^*$ be this value.
\begin{align*}
\frac{\Pr(H^w = h_w | h^v_1)}{\Pr(H^w = h_w | h^v_2)} 
& = \prod_{i}
\frac{\Pr(\varphi_{s_i+}(H^w) = \varphi_{s_i+}(h^w) |
  \varphi_{s_i+}(h^v_1))}{\Pr(\varphi_{s_i+}(H^w) =
  \varphi_{s_i+}(h^w) | \varphi_{s_i+}(h^v_2))} \\ 
& = \frac{\Pr(\varphi_{s_i^*+}(H^w) = \varphi_{s_i^*+}(h^w)
  | \varphi_{s_i^*+}(h^v_1))}{\Pr(\varphi_{s_i^*+}(H^w) =
  \varphi_{s_i^*+}(h^w) | \varphi_{s_i^*+}(h^v_2))} \\ 
& = 
\frac{\Pr(Z^{s_{i^*}} = \varphi_{s_i^*+}(h^w) - \varphi_{s_{i^*+}}(h^v_1))}
{\Pr(Z^{s_{i^*}} = \varphi_{s_i^*+}(h^w) - \varphi_{s_{i^*+}}(h^v_2))} \\
& \leq
\exp \left ( \frac{\epsilon_3 (s_{i^*} - s_{i^*-1})}{s_{i^*} - s_{i^*-1}} \right)
\leq e^{\epsilon_3}.
\end{align*}
\end{proof}

\section{Utility analysis of \textsc{PrivHist}}
\label{app:analysis}

Let $H^o$ be the output of either \textsc{PrivHist-LowPrivacy} or \textsc{PrivHist-HighPrivacy}. In both the low and high privacy regimes, the output error can be bounded as
\[
\ell_1(h, H^o) \indic_{N > 0} + n \indic_{N \leq 0}.
\]
Furthermore, 
\[
\EE[n \indic_{N \leq 0}] \leq n e^{-n\epsilon_1} \lesssim e^{-\epsilon_1/2},~\footnote{We use $\lesssim$ instead of $\cO$ notation and $\gtrsim$ instead of $\Omega$ notation for compactness.}
\]
where the last inequality, follows by breaking it in to cases $n = 0$ and $n > 0$. 
The bound on $\EE[|N-n|]$ follows from the fact that $N = n +
G(e^{-\epsilon_1})$, Lemma~\ref{lem:geomoments}, and the moments of the geometric distribution. In the next two sections, we bound $\ell_1(h, H^o) \indic_{N > 0}$ for both low privacy and high privacy regimes.

\subsection{Low privacy regime}

In the following analysis, let $Z$ be a geometric random variable distributed as
$G(e^{-\epsilon_2})$. Let $H^{b'} = H^{bs} + H^{b\ell}$.  
For the bounds on the histogram, by
Lemma~\ref{lem:addm} and triangle inequality,
\begin{equation}
\label{eq:low1}
\ell_1(h, H^e) \leq 4 \ell_1(H^a, H^d) \leq 4 \ell_1(H^{b'}, H^d) + 4 \ell_1(H^{b'}, H^a).
\end{equation}
For the second term, observe that since $H^{b'}$ is obtained by moving
$Z^b$ terms between $T$ and $T+1$ and then majorizing it. If $|Z^b| \leq M$, then majorization does not change the histogram. Hence,
\[
\ell_1(H^{b'}, H^a) = \ell_1(H^{b'}, H^a)  \indic_{|Z^b| \leq M} +  \ell_1(H^{b'}, H^a)  \indic_{|Z^b|  > M} \leq |Z^b| + (n + 2M(T+1))\indic_{|Z^b|  > M}.
\]
Taking expectation on both sides
\begin{equation}
\label{eq:low2}
\EE_N[\ell_1(H^{b'}, H^a) ] \leq \EE[|Z^b| + (n + 2M(T+1))\indic_{|Z^b|  > M}] \lesssim
 \EE[Z] + \frac{n + N}{N^2} e^{-2\epsilon_2}. \footnote{$\EE_N$ denotes conditional expectation w.r.t. $N$}
\end{equation}
For the first term, by Lemma~\ref{lem:dissociation},
\begin{equation}
\label{eq:low3}
\ell_1(H^{b'}, H^d) \leq \ell_1(H^{bs}, H^{ds}) + \ell_1(H^{b\ell}, H^{d\ell}).
\end{equation}
We now bound both the terms above. For the large counts, let $i_j$ be the index of the noisy count $N_{(i)}(H^{b\ell})$.
\begin{align*}
 \ell_1(H^{b\ell}, H^{d\ell}) 
 &\leq \sum_{i} | N_{(i)} (H^{b\ell}) - N_{(i)} (H^{d\ell})|  \\
 & \leq \sum_{i} | N_{(i)} (H^{b\ell}) - N_{i_j} (H^{d\ell})|   \\
 & \leq \sum_{i} | N_{(i)} (H^{b\ell}) - N_{i_j} (H^{c\ell})|.
\end{align*}
Since the number of terms above $T$ is at most $n/T + M + Z^b$, in expectation,
\begin{equation}
\label{eq:low4}
\EE_N[  \ell_1(H^{b\ell}, H^{d\ell})  ] \leq \EE\left[\sum_{i} |Z^{c\ell}_i|\right]
\lesssim \left(\frac{n}{T} + M + \EE|Z| \right) \cdot \EE[|Z|].
\end{equation}
For the smaller counts observe that
\begin{align}
\ell_1(H^{bs}, H^{ds}) 
&\leq \sum_{r \geq 0} |\varphi^{bs}_{r+}
- \varphi^{ds}_{r+} | \nonumber \\
& \stackrel{(a)}{\leq} 2 \sum_{r \geq 0} |\varphi^{bs}_{r+}
- \varphi^{\text{mon}}_{r+} | \nonumber \\
& \stackrel{(b)}{\leq} 2 \sqrt{T} \cdot \sqrt{ \sum_{r \geq 0} (\varphi^{bs}_{r+}
- \varphi^{\text{mon}}_{r+} )^2 } \nonumber \\
& \stackrel{(c)}{\leq} 2 \sqrt{T} \cdot \sqrt{ \sum_{r \geq 0} (\varphi^{bs}_{r+}
- \varphi^{\text{cs}}_{r+} )^2 } \nonumber \\
& \leq 2 \sqrt{T} \cdot \sqrt{ \sum_{r \geq 0} (Z^{cs}_r - Z^b)^2 }, \label{eq:low5}
\end{align}
where $(a)$ follows from the fact that rounding off increases the
error at most by $2$ (Lemma~\ref{lem:round}), $(b)$ follows by the Cauchy-Schwarz
inequality, and $(c)$ follows from the fact that $\varphi^{bs}_{r+}$
are monotonic and hence monotonic projection only decreases the error.
Hence in expectation,
\begin{equation}
    \label{eq:low6}
\EE_N[\ell_1(H^{bs}, H^{ds})] \lesssim T \sqrt{\EE[Z^2]}.
\end{equation}
Combining~\eqref{eq:low1},~\eqref{eq:low2},~\eqref{eq:low3},~\eqref{eq:low4},~\eqref{eq:low5}, and~\eqref{eq:low6},
\[
\EE_N[\ell_1(h, H^e)] \lesssim 
|Z| + 4 T \sqrt{\EE[Z^2]} + 2 \left( \frac{n}{T} + M + \EE|Z| \right) \cdot \EE[|Z|] + \frac{n+N}{N^2} e^{-2\epsilon_2}.
\]
Substituting $T = \lceil\sqrt{N} \rceil$ and $M = \lceil \frac{2\log
  Ne^{\epsilon_2}}{\epsilon_2} \rceil $, yields
\[
\EE_N[\ell_1(h, H^e)] \lesssim |Z| + \sqrt{N} \sqrt{\EE[Z^2]} + 2
\left( 1 + \frac{n}{\sqrt{N}} + \frac{\log N\epsilon_2}{\epsilon_2} + \EE|Z| \right)
\cdot \EE[|Z|] + \frac{n+N}{N} e^{-2\epsilon_2}.
\]
Taking expectation w.r.t. $N$ and using Lemma~\ref{lem:geomoments} yields
\[
\EE[\ell_1(h, H^e) \indic_{N > 0}] \lesssim  \sqrt{n \EE[Z^2]} + e^{-2\epsilon_2}  \lesssim  \sqrt{n} e^{-\epsilon/6},
\]
where the last inequality follows from moments of geometric distribution.
 \subsection{High privacy utility}
 \label{app:high_utility}
For the high privacy regime, by the triangle inequality,
\begin{equation}
    \label{eq:high1}
\ell_1(h, H^y) \leq \ell_1(h, H^u) +  \ell_1(H^u, H^y).
\end{equation}
For the first term, since we are only reducing counts of certain elements,
\begin{equation}
    \label{eq:high2}
\EE[\ell_{1}(h, H^u) ] \leq \EE[ \indic_{N < n/2} n] \leq
e^{-n\epsilon_1/2} n \lesssim \frac{1}{\epsilon_1}.
\end{equation}
The second term can be bounded as 
\begin{align}
\EE[\ell_{1}(H^u, H^y) ] 
&\leq \sum_{r > 0} | \varphi_{r+}(H^u) - \varphi_{r+}(H^y) | \nonumber \\
&\stackrel{(a)}{\leq} 2\sum_{r > 0} | \varphi_{r+}(H^u) - \varphi_{r+}(H^x) | \nonumber \\
&{\leq} 2\sum_{r > 0} | \varphi_{r+}(H^u) - \varphi_{r+}(H^v) | + | \varphi_{r+}(H^x) - \varphi_{r+}(H^v) |, \label{eq:high3}
\end{align}
where $(a)$ follows by Lemma~\ref{lem:round} and the last inequality follows by the triangle inequality. The first term in the last equation corresponds to the smoothing error and we analyze it now. 
\[
\varphi^{v}_{s_{i+1}+} = \sum^{s_{i+1}} _{r=s_i} \varphi^u_r  \frac{(r-s_{i})}{s_{i+1} - s_i} + 
\sum_{r >s_{i+1}}\varphi^u_r.
\]
Since $\varphi^v_j = 0$ for $j \notin S$,
\begin{align*}
\sum^{s_{i+1}-1}_{j=s_i}   \lvert \varphi^v_{j+} - \varphi^u_{j+}\rvert & = \sum^{s_{i+1}-1}_{j=s_i}  \left \lvert  \sum^{s_{i+1}} _{r=s_i} \varphi^u_r  \frac{(r-s_{i})}{s_{i+1} - s_i} -  \sum^{s_{i+1}} _{r=j} \varphi^u_{r} \right \rvert \\
& \leq \sum^{s_{i+1}}_{j=s_i} \sum^{s_{i+1}}_{r=s_{i}} \varphi^u_r \left \lvert \frac{(r-s_{i})}{s_{i+1} - s_i} - \indic_{r \geq j} \right \rvert \\
& =  \sum^{s_{i+1}}_{r=s_i} 2\varphi^u_r\frac{(r-s_{i})(s_{i+1} - r)}{s_{i+1} - s_i} \\
& \leq 2\sum^{s_{i+1}}_{r=s_i}\varphi^u_r  \min(s_{i+1} - r, r-s_{i}).
    \end{align*}
For $r \leq  T'$, that lies between $s_i$ and $s_{i+1}$ in $S$,
\[
|r - s_i| \leq \min(\lfloor T(1+q)^{i+1} \rfloor -r, r- \lfloor T(1+q)^{i}\rfloor) \lesssim  r q.
\]
If $r \geq T'$, the analysis depends on the value of $Z_b$. If $Z_b \geq -M$, $r \geq T'$, and $\varphi^u_r > 0$, then there exists a $s_i$ that is at most $Z^{cl}_i$ away for from $r$.
If not, then the error is at most $2n$. Hence, the smoothing error in expectation can be bounded by 
\begin{equation}
    \label{eq:high5}
\lesssim \sum^{T'}_{r \geq T} \varphi^u_r qr + \sum_{r \geq T'} \varphi^u_r \EE_N[|Z| \indic_{Z_b \geq -M}] + \EE_N[2n \indic_{Z_b < -M}] \lesssim n q  + \frac{n \EE[|Z|]}{T'} + \frac{n}{N}.
\end{equation}
For the second part, 
\begin{align*}
 \sum_{r \geq 1} |\varphi^v_{r+} -\varphi^x_{r+}| 
 & \stackrel{(a)}{\leq} \sum_{s_i \in S} |\varphi^v_{s_i+} -\varphi^x_{s_i+}| (s_{i} - s_{i-1}) \\
 & \stackrel{(b)}{\leq} \sqrt{|S|} \cdot \sqrt{\sum_{s_i \in S}(\varphi^v_{s_i+} -\varphi^x_{s_i+})^2  (s_{i} - s_{i-1})^2 } \\
 & \stackrel{(c)}{\leq} \sqrt{|S|} \cdot \sqrt{\sum_{s_i \in S}(\varphi^w_{s_i+} -\varphi^x_{s_i+})^2 (s_{i} - s_{i-1})^2 },
 \end{align*}
 where $(a)$ follows by observing that $\varphi^x_r = 0$ for $r \notin S$,
$(b)$ follows from the Cauchy-Schwarz inequality. $(c)$ follows from the fact that projecting on to the simplex only increases the error. By the second moments of $Z^s_i$, taking expectation on both sides yields
\begin{align*} 
 \sum_{r \geq 1} \EE_N|\varphi^v_{r+} -\varphi^x_{r+}| \lesssim 
 |S| \cdot \frac{1}{\epsilon}.
\end{align*}
We now bound the size of the set $S$. By step $(2)$ of the algorithm, number of elements in $S$ can be bounded by
\[
\lesssim T + 1 + \log_{1+q} \frac{T'}{T} + \frac{10n}{T'} + U,
\]
where $U$ is the number of elements less than $T'/10$ in $H^{b\ell}$, that upon adding noise increases to $T'$. Expectation of $U$ conditioned on $N$ is 
\begin{equation}
    \label{eq:high7}
\EE_N[U]  \lesssim \left( \frac{n}{T} + M + \EE[|Z|] \right) e^{-9T'\epsilon_2/10} \lesssim
\left( \frac{n}{\sqrt{N\epsilon}} + 1 + \frac{\log N}{\epsilon} \right) e^{-3\sqrt{N/\epsilon}}
\lesssim \frac{n \epsilon}{N} + 1,
\end{equation}
where the second inequality follows by substituting  $T' = 10 \sqrt{N/\epsilon^3_3}$ and third inequality follows by algebraic manipulation. Combining~\eqref{eq:high1},~\eqref{eq:high2},~\eqref{eq:high3},~\eqref{eq:high5}, and~\eqref{eq:high7}, and using the fact that quantization error is at most $\cO(n)$ yields
\begin{align*}
& \lesssim \min(n q, n) + \frac{n \EE[|Z|]}{T'} + \left( T + 1 + \log_{1+q}  \frac{T'}{T} + \frac{n}{T'} + \frac{n\epsilon}{N} \right) \frac{1}{\epsilon}  + \frac{n}{N}\\
& \lesssim \min(n q, n)  + \left( \sqrt{N\epsilon} + \frac{1}{q} \log \frac{2}{\epsilon} + \frac{n\sqrt{\epsilon^3}}{\sqrt{N}} + \frac{n\epsilon}{N}  \right) \frac{1}{\epsilon} + \frac{n}{N} \\
& \lesssim n \min \left( \frac{1}{\sqrt{N}} \sqrt{\frac{1}{\epsilon} \log \frac{2}{\epsilon}}, 1\right)
+ \left(\sqrt{N + 1}  \right) \sqrt{\frac{1}{\epsilon} \log \frac{2}{\epsilon}} + \frac{n\sqrt{\epsilon}}{\sqrt{N}}+ \frac{n}{N} 
 ,
\end{align*}
where the last equation follows by substituting the value of $q = \sqrt{\frac{1}{N\epsilon_3} \log \frac{1}{\epsilon_3}} $.
Taking expectation with respect to $N$ and using Lemma~\ref{lem:geomoments}, \eqref{eq:geomoments1} and the fact that $n\epsilon \gtrsim 1 $ yields,
\[
\EE[\ell_1(h, H^z) \indic_{N > 0}] \lesssim \sqrt{\frac{n}{\epsilon} \cdot \log \frac{2}{\epsilon}} + \frac{1}{\epsilon}.
\]

\subsection{Time complexity}
\label{app:time}

In this section, we provide a proof sketch of the time complexity. 
Suppose $\epsilon > 1$. The number of prevalences in $H^{bs}$ is $T \lesssim \sqrt{N}$. Similarly, the number of counts in $H^{b\ell}$ is $n/T \approx n/\sqrt{N}$. Further, the number of additional fake counts added is $M \lesssim \log N$ and the isotonic regression using PAVA takes linear in the size of the input. Hence, the expected run time, conditioned on $N$ is at most $\sqrt{N} + \frac{n}{\sqrt{N}} + M$, which upon taking expectation w.r.t. $N$ yields an expected run time of $\sqrt{n}$.

If $\epsilon \leq 1$, steps (1)-(4) in \textsc{PrivHist} takes time $\lesssim \sqrt{N/\epsilon} + \log N/\epsilon$. The time complexity to find boundaries, smooth prevalences, add noise, and perform isotonic regression is $|S| + \sqrt{n}$. By the utility analysis, $|S| \lesssim \sqrt{N \log \frac{1}{\epsilon} } + \frac{n\sqrt{\epsilon}}{\sqrt{N}}$. Summing all the time complexities and taking expectation w.r.t. $N$ similar to the utility analysis, yields a total time complexity of $\tcO\left(\sqrt{\frac{n}{\epsilon}} + \frac{\log n}{\epsilon} \right)$.

\section{Lower bound in the low privacy regime}
\label{app:low_high_privacy}
\begin{lemma}
Let $x \in \{0,1\}^k$. Suppose two vectors $x,y$ are neighbors if $||x-y||_1 \leq 1$. If $\hat{X} = M(x)$ be an $\epsilon$-DP estimate of $x$, then
\[
\max_{x \in \{0,1\}^k} \EE[||x-\hat{X}||_1] \gtrsim  k e^{-\epsilon}.
\]
\end{lemma}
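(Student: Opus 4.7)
The plan is to reduce to the case of a binary-valued mechanism and then run a per-coordinate two-point testing argument under the uniform prior on $\{0,1\}^k$. First, by post-processing, replace $\hat X$ with $\tilde X$ whose $i$th coordinate is $\tilde X_i = \indic_{\hat X_i > 1/2}$. Because $x_i \in \{0,1\}$, rounding can at most double the error, $|x_i - \tilde X_i| \leq 2 |x_i - \hat X_i|$, so $\EE\|x-\hat X\|_1 \geq \tfrac12 \EE\|x-\tilde X\|_1$, and $\tilde X$ is still $\epsilon$-DP. Next, bound the worst-case error from below by the Bayes risk under the uniform prior $\pi$ on $\{0,1\}^k$, and decompose coordinate-wise:
\[
\max_x \EE\|x-\hat X\|_1 \;\geq\; \tfrac12 \EE_{x\sim\pi}\|x-\tilde X\|_1 \;=\; \tfrac12 \sum_{i=1}^k \EE_{x\sim\pi}|x_i-\tilde X_i|.
\]

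For each fixed coordinate $i$ and each fixed $x_{-i}$, consider the two neighboring inputs $x^{(0)},x^{(1)}$ differing only at coordinate $i$. Set $a = \Pr(\tilde X_i = 0 \mid x^{(0)})$ and $b = \Pr(\tilde X_i = 0 \mid x^{(1)})$. Because $\tilde X_i$ is $\epsilon$-DP as a function of the input, applying the definition in both directions yields $b \geq e^{-\epsilon} a$ and $1-a \geq e^{-\epsilon}(1-b)$. The conditional expected error under the uniform coin for $x_i$ is exactly $\tfrac12((1-a)+b)$, so it suffices to lower bound $S := (1-a)+b$. Adding the two DP inequalities gives $S \geq e^{-\epsilon}(1+a-b) = e^{-\epsilon}(2-S)$, which rearranges to
\[
S \;\geq\; \frac{2e^{-\epsilon}}{1+e^{-\epsilon}} \;=\; \frac{2}{e^{\epsilon}+1}.
\]
This bound holds pointwise in $x_{-i}$, so integrating out gives $\EE_{x\sim\pi}|x_i-\tilde X_i| \geq \tfrac{1}{e^{\epsilon}+1}$ for every $i$.

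Summing over the $k$ coordinates and combining with the factor $\tfrac12$ from the rounding step yields
\[
\max_{x\in\{0,1\}^k} \EE\|x-\hat X\|_1 \;\geq\; \frac{k}{2(e^{\epsilon}+1)} \;\gtrsim\; k\,e^{-\epsilon},
\]
which is the claim. No step is especially delicate; the only place requiring a bit of care is the two-sided use of $\epsilon$-DP on the marginal distribution of $\tilde X_i$, which is legitimate because $\tilde X_i$ is a post-processing of $\hat X$. The appeal to a uniform prior is standard and converts a worst-case bound into an average-case analysis, which is what makes the coordinate-wise decomposition possible.
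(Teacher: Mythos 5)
Your proof is correct and follows essentially the same approach as the paper's: lower-bound the worst case by the Bayes risk under the uniform prior on $\{0,1\}^k$, decompose coordinate-wise, and finish with a two-point DP argument on the pair of neighbors differing in coordinate $i$. The only cosmetic difference is in the per-coordinate step — the paper skips your rounding detour (which costs a factor of $2$) by observing directly that $|1-\hat x_i|+|0-\hat x_i|\geq 1$ for any real-valued output and applying $\epsilon$-DP in a single direction, giving the slightly sharper per-coordinate constant $\tfrac12 e^{-\epsilon}$ in place of your $\tfrac{1}{2(e^{\epsilon}+1)}$; both are $\gtrsim e^{-\epsilon}$, so the lemma follows either way.
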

\begin{proof}
Let $p(x)$ be the uniform distribution over $\{0,1\}^k$. For a vector $v \in \{0,1\}^{k-1}$, let $\cX^i_v$ denote the two vectors such that $x^{i-1}_{1} = v^{i-1}_1$ and $x^{k}_{i+1} = v^{k-1}_i$. Then
\begin{align}
    \max_{x \in \{0,1\}^k} \EE[||x-\hat{X}||_1] 
    & \geq \frac{1}{2^k} \sum_{x \in \{0,1\}^k} \EE[||x-\hat{X}||_1] \nonumber \\
    & = \frac{1}{2^k} \sum_{x \in \{0,1\}^k} \sum^k_{i=1}  \EE |x_i -\hat{X}_i|  \nonumber \\
    & = \sum^k_{i=1} \frac{1}{2^k} \sum_{x \in \{0,1\}^k}  \EE |x_i -\hat{X}_i| \nonumber  \\
  & =   \sum^k_{i=1} \frac{1}{2^{k-1}} \sum_{v \in \{0,1\}^{k-1}} \frac{1}{2} \sum_{x \in \cX^i_v} \EE |x_i -\hat{X}_i|. \label{eq:lower1}
\end{align}
For any $v$ and $i$, vectors in $\cX^i_v$ are neighbors and hence by the definition of DP,
\begin{align*}
 \sum_{x \in \cX^i_v} \EE |x_i -\hat{X}_i| 
& = \sum_{\hat{x}_i} \Pr(\hat{X_i} = \hat{x}_i | x_i = 1) | 1- \hat{x}_i|  + \Pr(\hat{X_i} = \hat{x}_i | x_i = 0) | 0 - \hat{x}_i|  \\
& \geq  \sum_{\hat{x}_i}  \Pr(\hat{X_i} = \hat{x}_i | x_i = 1) | 1- \hat{x}_i|  + e^{-\epsilon} \Pr(\hat{X_i} = \hat{x}_i | x_i = 1) | 0 - \hat{x}_i|  \\
& \geq e^{-\epsilon} \sum_{\hat{x}_i}  \Pr(\hat{X_i} = \hat{x}_i | x_i = 1)  \\
& \geq e^{-\epsilon}.
\end{align*}
Substituting the above lower bound in \eqref{eq:lower1} yields the lemma.
\end{proof}
We now use the above bound to prove Theorem~\ref{thm:low_high_privacy}. We first define a of histograms to show the lower bound. Let $k = \sqrt{n}/10$. For a given vector $x \in \{0,1\}^k$, let $\varphi_{4i} = \varphi_{4i+3} = x_{i-1}$ and  $\varphi_{4i+1} = \varphi_{4i+2} =1-  x_{i-1}$, $\varphi_r =1 $ for $r = n - \sum^{4k}_{i=1} \varphi_r r$, and $\varphi_{r} = 0$ otherwise. Observe that $\sum_{r \geq 1}\varphi_r r = n$ and are valid histograms. If two vectors $x$ and $y$ have hamming distance $d$, then the corresponding  distance between anonymized histograms is $2d$.

Consider a slightly different definition of neighboring datasets over histograms, where two datasets are neighboring if the neighboring datasets are distance $2$ apart. If a mechanism is $\epsilon$-DP in the previous definition of neighboring datasets, then the mechanism is $2\epsilon$-DP in the new notion of neighbors. 

One mechanism for releasing the vectors $x \in \{0,1\}^k$ with $2\epsilon$-DP is to encode it as the histograms as mentioned above and release them. Since such a mechanism has hamming distance $\gtrsim ke^{-2\epsilon}$, the anonymized histograms cannot be estimated with accuracy $\gtrsim ke^{-2\epsilon}$ with $2\epsilon$-DP under new definition of neighbors and hence it cannot be estimated with accuracy $\gtrsim ke^{-2\epsilon}$ with $\epsilon$-DP under the old definition of neighbors.

\section{Proof of Corollary~\ref{cor:properties}}
\label{app:corollary}
Recall that $N$ is the DP estimate of $n$ and $H$ is the DP estimate of $h$. 
In the following Let $X^n$ be the initial set of $n$ samples. Let $X^{N}$ be $N$ samples obtained from $p, X^n$ as follows.
If $N < n$, obtain $X^{N}$ by removing $n - {N}$ samples uniformly from $X^n$.  If ${N} \geq n$, add ${N}-n$ samples from $p$ to $X^n$. Note that $X^{N}$ are $N$ i.i.d. samples from $p$. We bound the error of the estimator as follows.
\[
f(p) - \hat{f}^p = (f(p) - \hat{f}^p)\indic_{N > 0} + (f(p) - \hat{f}^p)\indic_{N = 0}.
\]
Taking expectation on the second term,
\[
\EE[(f(p) - \hat{f}^p)\indic_{N = 0}] = \EE[f(p) \indic_{N = 0}] \leq f(p) e^{-nc\epsilon},
\]
for some constant $c$. For the case $N > 0$, we bound as follows.
\begin{align*}
& f(p) - \hat{f}^p \\
& = f(p) - \sum_{r \geq 1} f(r, {N}) \varphi_r(H) \\
& = f(p) - \sum_{r \geq 1} f(r, {N}) \varphi_r(h(X^{N})) + \sum_{r \geq 1} f(r, {N}) \varphi_r(h(X^N)) 
- \sum_{r \geq 1} f(r, {N}) \varphi_r(h(X^n)) \\
& +\sum_{r \geq 1} f(r, {N}) \varphi_r(h(X^n))  - \sum_{r \geq 1} f(r, {N}) \varphi_r(H).
\end{align*}
We bound each of the three (difference) terms above. First observe that by the assumptions in the theorem:
\[
\EE_N\Bigl[\Bigl|f(p) - \sum_{r \geq 1} f(r, {N}) \varphi_r(h(X^{N}))\Bigr|\Bigr] \leq \cE(\hat{f}, {N}) \leq \cE(\hat{f}, n) + |{N}-n| \frac{{N}^{\beta}}{{N}}.
\]
Since $X^{N}$ and $X^n$ differ in at most $N -n$ terms, by the properties of sorted $\ell_1$ distance,
\[
\sum_{r \geq 1} f(r, {N}) \varphi_r(h(X^N)) 
- \sum_{r \geq 1} f(r, {N}) \varphi_r(h(X^n)) \leq |{N}-n| \max_{r} |f(r,N)| \leq |{N}-n| \cdot \frac{{N}^{\beta}}{{N}}.
\]
Further by the properties of sorted $\ell_1$ distance,
\begin{align*}
\EE_N[|\sum_{r \geq 1} f(r, {N}) \varphi_r(h(X^n))  - \sum_{r \geq 1} f(r, {N}) \varphi_r(H)|]
& \leq \max_{r} |f(r, {N})| \EE_N[\ell_1(h, H)] \\
& \lesssim \frac{N^\beta}{N} \cdot \EE_N[\ell_1(h, H)].
\end{align*}
Summing all the three terms, we get that the error is at most 
\begin{equation}
    \label{eq:sum_three_est}
 \lesssim \cE(\hat{f}, n) + |{N}-n| \cdot \frac{{N}^{\beta}}{{N}} + \frac{N^\beta}{N} \cdot \EE_N[\ell_1(h, H)].
\end{equation}
For $\epsilon > 1$, difference between the expected error and $\cE(\hat{f}, n)$ is 
\begin{align*}
& \lesssim   \EE \left[\frac{N^\beta}{N} |N - n| \indic_{N > 0} + \frac{N^\beta}{N} \left(\sqrt{N} + \frac{n}{\sqrt{N}} \right) \indic_{N > 0} \right] e^{-c\epsilon} \\
&\lesssim   \EE \left[\frac{N^\beta}{N} |N - n| \indic_{N > n/2} + \frac{N^\beta}{N} \left(\sqrt{N} + \frac{n}{\sqrt{N}} \right) \indic_{N > n/2} \right] e^{-c\epsilon} 
+    \EE \left[n \indic_{n/2 \geq N > 0} \right] e^{-c\epsilon} \\
&\lesssim   \EE \left[\frac{n^\beta}{n} |N - n|  + \frac{n^\beta}{\sqrt{n}} \right] e^{-c\epsilon} 
+    \EE \left[n \indic_{n/2 \geq N > 0} \right] e^{-c\epsilon} \\
&\lesssim 
n^{\beta - 1/2} e^{-c\epsilon} + n e^{-nc'\epsilon} \\
& \lesssim 
n^{\beta - 1/2} e^{-c\epsilon},
\end{align*}
where the last inequality uses the fact that $\epsilon > 1$. 
Combining the result with the case $N=0$, we get that
\[
\EE[|f(p) - \hat{f}^p|] \lesssim \cE(\hat{f}, n)+  n^{\beta - 1/2} e^{-c\epsilon} + f(p) e^{-n\epsilon}.
\]
The result for $\epsilon > 1$ follows if
 \[
 n \geq \max\left( n(\hat{f}, \alpha), \cO \left( \left( \frac{1}{\alpha e^{c\epsilon}}
\right)^{\frac{2}{1-2\beta}}+ \frac{1}{\epsilon} \log \frac{f_{\max}}{\alpha} \right) \right).
\]
For $\Omega(1/n) \leq \epsilon \leq 1$, by~\eqref{eq:sum_three_est}, the difference between the expected error and $\cE(\hat{f}, n)$ is 
\begin{align*}
& \lesssim \EE \left[\frac{N^\beta}{N} \left(|N - n| + n \min \left( \frac{1}{\sqrt{N}} \sqrt{\frac{1}{\epsilon} \log \frac{2}{\epsilon}}, 1\right) + \sqrt{N} \sqrt{\frac{1}{\epsilon} \log \frac{2}{\epsilon}} + \frac{n\sqrt{\epsilon}}{\sqrt{N}} + \frac{n}{N} \right) \indic_{N > 0} \right] \\
& \lesssim  \EE \left[\frac{N^\beta}{N} \left(|N - n| + \sqrt{N} \sqrt{\frac{1}{\epsilon} \log \frac{2}{\epsilon}} + \frac{n\sqrt{\epsilon}}{\sqrt{N}} + \frac{n}{N} \right) \indic_{N >n/2} \right]
+ \EE \left[\left(\sqrt{\frac{1}{\epsilon} \log \frac{2}{\epsilon}} + {n} \right) \indic_{n/2\geq N > 0} \right] \\
& \lesssim \EE \left[\frac{n^\beta}{n} |N - n| + \frac{n^\beta}{n} \left(\sqrt{n} \sqrt{\frac{1}{\epsilon} \log \frac{2}{\epsilon}} \right) \right] 
+ \EE \left[\left(\sqrt{\frac{1}{\epsilon} \log \frac{2}{\epsilon}} + {n} \right) \indic_{n/2 \geq N > 0} \right] \\
&  \lesssim\frac{n^\beta}{n\epsilon} + \frac{n^\beta}{\sqrt{n}}  \sqrt{\frac{1}{\epsilon} \log \frac{2}{\epsilon}} 
+ \left(\sqrt{\frac{1}{\epsilon} \log \frac{2}{\epsilon}} + {n} \right) e^{-nc\epsilon} \\
& \lesssim  \frac{n^\beta}{\sqrt{n}}  \sqrt{\frac{1}{\epsilon} \log \frac{2}{\epsilon}} 
+ {n} e^{-nc\epsilon}.
\end{align*}
Combining with the result for the case $N=0$, we get
\[
\EE[|f(p) - \hat{f}^p|] \lesssim \cE(\hat{f}, n)+  n^{\beta - 1/2}  \sqrt{\frac{1}{\epsilon} \log \frac{2}{\epsilon}} + (n + f(p)) e^{-nc\epsilon}.
\]
The result for $\Omega(1/n) \leq \epsilon < 1$ follows if 
 \[
 n \geq  \max\left( n(\hat{f}, \alpha), \cO \left( \left( \frac{\sqrt{\log (2/\epsilon)}}{\alpha \sqrt{\epsilon}}
\right)^{\frac{2}{1-2\beta}} + \frac{1}{\epsilon} \log \frac{f_{\max}}{\alpha\epsilon} \right) \right).
\]

\end{document}